\def\eqref#1{equation~\ref{#1}}
\def\1{\bm{1}}
\def\re{{\textnormal{e}}}
\DeclareMathAlphabet{\mathsfit}{\encodingdefault}{\sfdefault}{m}{sl}
\SetMathAlphabet{\mathsfit}{bold}{\encodingdefault}{\sfdefault}{bx}{n}
\newcommand{\E}{\mathbb{E}}
\newcommand{\R}{\mathbb{R}}
\DeclareMathOperator*{\argmax}{arg\,max}
\DeclareMathOperator*{\argmin}{arg\,min}
\title{Regularized Inverse Reinforcement Learning}
\author{Wonseok Jeon\thanks{Correspondence to: Wonseok Jeon \texttt{<jeonwons@mila.quebec>}}~~$^{,1, 2}$,
        Chen-Yang Su$^{1, 2}$,
        Paul Barde$^{1, 2}$,
        Thang Doan$^{1, 2}$,
        \\
        \textbf{Derek Nowrouzezahrai}$^{1, 2}$,
        \textbf{Joelle Pineau}$^{1, 2, 3}$
        \\
        $^{1}$Mila, Quebec AI Institute
        \\
        $^{2}$School of Computer Science, McGill University
        \\
        $^{3}$Facebook AI Research
}
\newcommand{\M}{\mathcal{M}}
\renewcommand{\S}{\mathcal{S}}
\newcommand{\A}{\mathcal{A}}
\newcommand{\X}{\mathcal{X}}
\newcommand{\D}{\mathcal{D}}
\newcommand{\SA}{{\mathcal{S}\times\mathcal{A}}}
\renewcommand{\P}{\Delta}
\newcommand{\I}{\mathbb{I}}
\newcommand{\inner}[3]{\langle#1, #2\rangle_{#3}}
\newtheorem{lemma}{Lemma}
\newtheorem{corollary}{Corollary}
\newtheorem{definition}{Definition}
\newcommand{\pie}{\pi_{\!_E}}
\newcommand{\Qe}{Q_{\!_E}}
\newcommand{\Ve}{V_{\!_E}}
\renewcommand{\re}{r_{\!_E}}
\newcommand{\tildere}{\tilde{r}_{\!_E}}
\newcommand{\De}{\mathcal{D}_{\!_E}}
\newcommand{\bOmega}{\bar{\Omega}}
\renewcommand{\eqref}[1]{Eq.(\ref{#1})}
\newcommand{\lemmaref}[1]{\textbf{Lemma~\ref{#1}}}
\newcommand{\corollaryref}[1]{\textbf{Corollary~\ref{#1}}}
\newcommand{\pd}{\pmb{d}}
\newcommand{\px}{\pmb{x}}
\newcommand{\pD}{\pmb{D}}
\newcommand{\pI}{\pmb{I}}
\newcommand{\one}[1]{\pmb{\mathit{1}}_{#1}}
\newcommand{\ppi}{\bar{\pmb{\pi}}}
\newcommand{\bpi}{\bar{\pi}}
\DeclareMathOperator{\arctantwo}{arctan2}
\newcommand\xrowht[2][0]{\addstackgap[.5\dimexpr#2\relax]{\vphantom{#1}}}
\newcommand{\hpi}{\hat{\pi}}
\newcommand{\htheta}{\hat{\theta}}
\newcommand{\hmu}{\hat{\mu}}
\newcommand{\hSigma}{\hat{\Sigma}}
\newcommand{\hsigma}{\hat{\sigma}}
\newcommand{\hnu}{\hat{\nu}}
\definecolor{darkblue}{rgb}{0.0,0.0,0.0}
\newcommand{\Blue}[1]{{\color{darkblue}#1}}
\newcolumntype{P}[1]{>{\centering\arraybackslash}p{#1}}
\begin{document}

\maketitle
\begin{abstract}
Inverse Reinforcement Learning (IRL) aims to facilitate a learner's ability to imitate expert behavior by acquiring reward functions that explain the expert's decisions.
\emph{Regularized IRL} applies \Blue{strongly} convex regularizers to the learner's policy in order to avoid 
\Blue{the expert's behavior being rationalized by arbitrary constant rewards, also known as degenerate solutions.}
We propose \Blue{tractable} solutions, and practical methods to obtain them, for regularized IRL. 
Current methods are restricted to the maximum-entropy IRL framework, limiting them to Shannon-entropy regularizers, as well as proposing \Blue{the} solutions that are intractable \Blue{in practice}.
We present theoretical backing for our proposed IRL method's applicability for both discrete and continuous controls, empirically validating our performance on a variety of tasks.  
\end{abstract}

\section{Introduction}

Reinforcement learning (RL) has been successfully applied to many challenging domains including games~\citep{mnih2015human,mnih2016asynchronous} and robot control~\citep{schulman2015trust,fujimoto2018addressing,haarnoja2018soft}.
Advanced RL methods often employ policy regularization motivated by, e.g., boosting exploration~\citep{haarnoja2018soft} or safe policy improvement~\citep{schulman2015trust}.
While Shannon entropy is often used as a policy regularizer~\citep{ziebart2008maximum}, \cite{geist2019theory} recently proposed a theoretical foundation of \emph{regularized Markov decision processes} (MDPs)---a framework that uses
\Blue{strongly} convex functions as policy regularizers.
Here, one crucial advantage is that an optimal policy \Blue{is} shown to \emph{uniquely} \Blue{exist}, whereas multiple optimal policies may exist in the absence of policy regularization (and depending on the given reward structure).

Meanwhile, since RL requires a given or known reward function (which can often involve non-trivial reward engineering),
Inverse Reinforce Learning (IRL)~\citep{russell1998learning,ng2000algorithms}---the problem of acquiring a reward function that promotes expert-like behavior---is more generally adopted in practical scenarios like robotic manipulation~\citep{finn2016guided}, autonomous driving~\citep{sharifzadeh2016learning, wu2020efficient} and clinical motion analysis~\citep{li2018inverse}. In these scenarios, defining a reward function beforehand is particularly challenging and IRL is simply more pragmatic.
However, complications with IRL in unregularized MDPs relate to the issue of \Blue{degeneracy, where any constant function can rationalize the expert's behavior~\citep{ng2000algorithms}.} 

Fortunately, \cite{geist2019theory} show that IRL in regularized MDPs---\emph{regularized IRL}---\Blue{does} not contain such degenerate solutions due to the uniqueness of the optimal policy for regularized MDPs.
Despite this, no
\Blue{tractable} solutions of regularized IRL---other than maximum-Shannon-entropy IRL (MaxEntIRL)~\citep{ziebart2008maximum,ziebart2010modeling,ho2016generative,finn2016connection,fu2018learning}---have been proposed.

\Blue{In \citet{geist2019theory}, solutions for regularized IRL were proposed. However,} they are generally intractable since they require
a closed-form relation between the policy and optimal value function and \Blue{the knowledge on model dynamics}. Furthermore, practical algorithms for solving regularized IRL problems have not yet been proposed.

We summarize our \Blue{contributions} as follows:
unlike the solutions in~\cite{geist2019theory}, we propose tractable solutions for regularized IRL problems that \Blue{can be derived from policy regularization and its gradient in discrete control problems (Section~\ref{solutionofirlinregularizedMDP}).
We additionally show that our solutions are tractable for Tsallis entropy regularization with multi-variate Gaussian policies in continuous control problems (Section~\ref{IRLTSALLIS}).}
We devise Regularized Adversarial Inverse Reinforcement Learning (RAIRL), a practical sample-based method for policy imitation and reward learning in regularized MDPs, which generalizes adversarial IRL (AIRL,~\citet{fu2018learning}) (Section~\ref{sec:algorithm}).
Finally, we empirically validate our RAIRL method on both discrete and continuous control tasks, evaluating both episodic scores and divergence minimization \Blue{perspective~\citep{ke2019imitation,ghasemipour2019divergence,dadashi2020primal}} (Section~\ref{experiments}).

\section{Preliminaries}

\paragraph{Notation}
For finite sets $X$ and $Y$, $Y^X$ is a set of functions from $X$ to $Y$.
$\P^X$ ($\P^X_Y$) is a set of (conditional) probabilities over $X$ (conditioned on $Y$). Especially for the conditional probabilities $\Blue{p_{X|Y}}\in\P^X_Y$, we say $\Blue{p_{X|Y}}(\cdot|y)\in\P^X$ for $y\in Y$.
$\R$ is the set of real numbers. For functions $f_1, f_2\in\R^X$, the inner product between $f_1$ and $f_2$ on $X$ is defined as $\inner{f_1}{f_2}{X}:=\sum_{x\in X}f_1(x)\,f_2(x)$.

\paragraph{Regularized Markov Decision Processes and Reinforcement Learning}
We consider sequential decision making problems where an agent sequentially chooses its actions after observing the state of the environment, and the environment in turn emits a reward \Blue{with} state transition. 
Such an interaction between the agent and the environment is modeled as an infinite-horizon \Blue{Markov Decision Process (MDP)}, $\M^r:=\langle\S,\A,P_0,P,r,\gamma\rangle$ and the agent's policy $\pi\in\P_\S^{\A}$. 
The terms within the MDP are defined as follows:
$\S$ is a finite state space,
$\A$ is a finite action space,
$P_0\in\P^\S$ is an initial state distribution,
$P\in\P_{\SA}^\S$ is a state transition probability,
$r\in\R^{\SA}$ is a reward function, and
$\gamma\in[0, 1)$ is the discount factor. 
We also define an MDP without reward as $\M^-:=\langle\S,\A,P_0,P,\gamma\rangle$. 
The normalized state-action visitation distribution, ${d}_\pi\in\Delta^\SA$, associated with $\pi$ is defined as the expected discounted state-action visitation of $\pi$, i.e., 
$
        {d}_\pi(s, a)
        :=
        (1-\gamma)\cdot
        \E_\pi[
            \sum_{i=0}^\infty
            \gamma^i \I\{s_i=s, a_i=a\}
        ]
$,
where the subscript $\pi$ on $\E$ means that a trajectory $(s_0, a_0, s_1, a_1, ...)$ is randomly generated from $\M^-$ and $\pi$, and $\I\{\cdot\}$ is an indicator function. 
Note that ${d}_\pi$ satisfies the transposed Bellman recurrence~\citep{boularias2010bootstrapping,zhang2019gendice}:
\begin{align*}
    d_\pi(s, a)
    =
    (1-\gamma)P_0(s)\pi(a|s)
    +
    \gamma
    \pi(a|s)
    \sum_{\bar{s}, \bar{a}}
    P(s|\bar{s}, \bar{a})d_\pi(s, a).
\end{align*}
We consider RL in regularized MDPs~\citep{geist2019theory}, where the policy is optimized with a causal \Blue{policy} regularizer.
Mathematically for an MDP $\M^r$ and a strongly convex function $\Omega:\P^{\A}\rightarrow\R$, the objective in regularized MDPs is to seek $\pi$ that maximizes the expected discounted sum of rewards, or \emph{return} in short, with policy regularizer $\Omega$:
\begin{align}
    \argmax_{\pi\in\P^\A_\S}J_\Omega(r, \pi)
    :=
    \E_\pi
    \left[
        \sum_{i=0}^\infty
        \gamma^i \{r(s_i, a_i) - \Omega(\pi(\cdot|s_i))\}
    \right]
    =
    \frac{1}{1-\gamma}
    \E_{(s, a)\sim d_\pi}
    \left[r(s, a)-\Omega(\pi(\cdot|s)\right].
    \label{RL_OBJECTIVE}
\end{align}
It turns out that the optimal solution of \eqref{RL_OBJECTIVE} is unique~\citep{geist2019theory}, whereas multiple optimal policies may exist in unregularized MDPs \Blue{(See Appendix~\ref{regularizedrl} for detailed explanation)}. 
In later work~\citep{yang2019regularized}, 
$
    \Omega(\Blue{p})
    =
    -\lambda\E_{a\sim\Blue{p}}\phi(\Blue{p}(a)), \Blue{p}\in\Blue{\P}^\A
$ was considered for $\lambda>0$ and $\phi:(0, 1]\rightarrow\R$ satisfying some mild conditions.
For example, RL with Shannon entropy regularization~\citep{haarnoja2018soft} can be recovered by $\phi(\Blue{x})=-\log\Blue{x}$,
while RL with Tsallis entropy regularization~\citep{lee2020generalized} can be recovered from  $\phi(\Blue{x})=\frac{k}{q-1}(1-\Blue{x}^{q-1})$ for $k>0, q>1$.
The optimal policy $\pi^*$ for \eqref{RL_OBJECTIVE} with $\Omega$ from~\cite{yang2019regularized} is shown to be
\begin{gather}
    \pi^*(a|s)
    =
    \max
    \left\{
        g_{\phi}\left( 
            \frac{\mu^*(s)-Q^*(s, a)}{\lambda}
        \right)
        ,
        0
    \right\},\\
    \label{eq:opt_pol}
    Q^*(s, a)
    =
    r(s, a)+\gamma\E_{s'\sim P(\cdot|s, a)}V^*(s'),
    V^*(s)
    =
    \mu^*(s)
    -
    \lambda\sum_{a\in\A}\pi^*(a|s)^2\phi'(\pi^*(a|s)),
\end{gather}
where $\phi'(x)=\frac{\partial}{\partial x}\phi(x)$, $g_{\phi}$ is an inverse function of $f_{\phi}'$ for $f_{\phi}(x):=x\phi(x), x\in(0, 1]$, and $\mu^*$ is a normalization term such that $\sum_{a\in\A}\pi^*(a|s)=1$.
\Blue{Note that we need to solve constraint optimization problem w.r.t. $\mu^*$ to derive a closed-form relation between optimal policy $\pi^*$ and value function $Q^*$.
However, such closed-form relations have not been discovered except for Shannon-entropy regularization~\citep{haarnoja2018soft} and specific instances ($q=1, 2, \infty$) of Tsallis-entropy regularization~\citep{lee2019tsallis} to the best of our knowledge.}

\vspace{-0.1in}
\paragraph{Inverse Reinforcement Learning}
Given a set of demonstrations from an expert policy $\pie$, IRL~\citep{russell1998learning,ng2000algorithms} is the problem of seeking a reward function from which we can recover $\pi_E$ through RL.
\Blue{However, IRL in unregularized MDPs has been shown to be an ill-defined problem since \emph{(1)} any constant reward function can rationalize every expert and \emph{(2)} multiple rewards meet the criteria of being a solution~\citep{ng2000algorithms}.}
Maximum entropy IRL (MaxEntIRL)~\citep{ziebart2008maximum,ziebart2010modeling} is capable of solving the first issue by seeking a reward function that maximizes expert's return along with Shannon entropy of expert policy. 
Mathematically for the RL objective $J_\Omega$ in~\eqref{RL_OBJECTIVE} and $\Omega=-\mathcal{H}$ for negative
Shannon entropy
$\mathcal{H}(\Blue{p})=\E_{a\sim\Blue{p}}[-\log\Blue{p}(a)]$
~\citep{ho2016generative}, the objective of MaxEntIRL is
\begin{align}
    \mathrm{MaxEntIRL}
    (\pie)
    :=
    \argmax_{r\in\R^\SA}
    \left\{
        J_{-\mathcal{H}}(r, \pie)
        -
        \max_{\pi\in\P_\S^\A} 
        J_{-\mathcal{H}}(r, \pi)
    \right\}.
\label{EQMAXENTIRL}
\end{align}

Another commonly used IRL method is Adversarial Inverse Reinforcement Learning (AIRL)~\citep{fu2018learning} which involves generative adversarial training~\citep{goodfellow2014generative,ho2016generative} to acquire a solution of MaxEntIRL.
AIRL considers the structured discriminator~\citep{finn2016connection} 
$
    D(s, a)
    =
    \sigma(r(s, a) - \log \pi(a|s))
    =
    \frac{e^{r(s, a)}}{e^{r(s, a)}+\pi(a|s)}
$ for $\sigma(x):=1/(1+e^{-x})$ and iteratively optimizes the following objective:
\begin{gather}
    \max_{r\in\R^\SA}~
    \E_{(s, a)\sim{d}_{\pie}}
    \left[ 
        \log D_{r, \pi}(s, a)
    \right] 
    +
    \E_{(s, a)\sim{d}_\pi}
    \left[ 
        \log (1-D_{r, \pi}(s, a))
    \right],
    \nonumber\\
    \max_{\pi\in\P_\S^\A}~
    \E_{(s, a)\sim{d}_\pi}
    \left[
        \log D_{r, \pi}(s, a)
        -
        \log (1-D_{r, \pi}(s, a))
    \right]
    =
    \max_{\pi\in\P_\S^\A}~
    \E_{(s, a)\sim{d}_\pi}
    \left[
        r(s, a)
        -
        \log\pi(a|s)
    \right]
    .\label{eq:airl1}
\end{gather}
It turns out that AIRL minimizes the divergence between visitation distributions ${d}_\pi$ and ${d}_{\pie}$ by solving 
$
    \min_{\pi\in\P_\S^{\A}} \mathrm{KL}(d_\pi || d_{\pie})
$
for Kullback-Leibler (KL) divergence $\mathrm{KL}$~\citep{ghasemipour2019divergence} .

\section{Inverse Reinforcement Learning in Regularized MDPs}
\label{sec:solution_RIRL}

\Blue{In this section, we propose the solution of IRL in regularized MDPs---\emph{regularized IRL}---and relevant properties in}
Section~\ref{solutionofirlinregularizedMDP}.
We then discuss a specific instance of our proposed solution where Tsallis entropy regularizers and multi-variate Gaussian policies are used in continuous action spaces. 

\subsection{Solutions of regularized IRL}
\label{solutionofirlinregularizedMDP}

We consider regularized IRL that generalizes MaxEntIRL in \eqref{EQMAXENTIRL} to IRL with a class of \Blue{strongly} convex policy regularizers:
\begin{align}
    \mathrm{IRL}_{\Omega}(\pie)
    :=
    \argmax_{r\in\R^\SA}
    \left\{
        J_{\Omega}(r, \pie)
        -
        \max_{\pi\in\P_\S^\A} 
        J_{\Omega}(r, \pi)
    \right\}.
    \label{EQRIRL}
\end{align}
For any convex policy regularizer $\Omega$, regularized IRL does not suffer from degenerate solutions since there is a unique optimal policy in any regularized MDP~\citep{geist2019theory}.
While \citet{geist2019theory} proposed solutions of regularized IRL,
\Blue{those are intractable solutions (See Appendix~\ref{intractable_form_solution} for detailed explanation).
In the following lemma, we propose tractable that only requires the evaluation of policy regularizer ($\Omega$) and its gradient ($\nabla\Omega$) which is more tractable in practice.}
Our solution is motivated from figuring out a reward function that is capable of converting regularized RL into equivalent divergence minimization problem \Blue{associated with} $\pi$ and $\pie$:
\begin{lemma}\label{lemma1}
For a policy regularizer $\Omega:\P^\A\rightarrow\R$,
let us define 
\begin{align}
    t(s, a;\pi)
    :=
    \Omega'(s, a;\pi)
    -
    \E_{a'\sim\pi(\cdot|s)}[\Omega'(s, a';\pi)]+\Omega(\pi(\cdot|s))
    \label{TARGET_REWARD}
\end{align}
for $\Omega'(s, \cdot;\pi):=\nabla\Omega(\pi(\cdot|s))
:=[\nabla_{\Blue{p}}\Omega(\Blue{p})]_{\Blue{p}=\pi(\cdot|s)}
\in\R^{\A}, s\in\S$. Then, $t(s, a;\pie)$ for expert policy $\pie$ is a solution of regularized IRL with $\Omega$.
\end{lemma}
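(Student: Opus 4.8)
The plan is to exploit the elementary observation that, for every reward $r\in\R^\SA$, the bracketed quantity in \eqref{EQRIRL} is non-positive, because $\pie$ is itself a feasible policy for the inner maximization, so $J_\Omega(r,\pie)\le\max_{\pi\in\P_\S^\A}J_\Omega(r,\pi)$. Hence, to show that $t(\cdot,\cdot;\pie)\in\mathrm{IRL}_\Omega(\pie)$ it suffices to verify that the particular reward $r=t(\cdot,\cdot;\pie)$ attains the value $0$, i.e.\ that $\pie$ is an optimal policy for the regularized MDP with reward $t(\cdot,\cdot;\pie)$, equivalently that $J_\Omega(t(\cdot,\cdot;\pie),\pie)=\max_{\pi}J_\Omega(t(\cdot,\cdot;\pie),\pi)$.

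The core of the argument is a direct evaluation of $J_\Omega(t(\cdot,\cdot;\pie),\pi)$ for an arbitrary policy $\pi$, using the visitation-distribution form on the right-hand side of \eqref{RL_OBJECTIVE}. I would write $d_\pi(s,a)=d_\pi(s)\,\pi(a|s)$ and carry out the expectation over $a\sim\pi(\cdot|s)$ first. Plugging in \eqref{TARGET_REWARD}, the terms $\E_{a'\sim\pie(\cdot|s)}[\Omega'(s,a';\pie)]$ and $\Omega(\pie(\cdot|s))$ are constants in $a$, while $\E_{a\sim\pi(\cdot|s)}[\Omega'(s,a;\pie)]=\inner{\nabla\Omega(\pie(\cdot|s))}{\pi(\cdot|s)}{\A}$. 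Collecting terms, $\E_{a\sim\pi(\cdot|s)}\big[t(s,a;\pie)-\Omega(\pi(\cdot|s))\big]$ equals $-\big(\Omega(\pi(\cdot|s))-\Omega(\pie(\cdot|s))-\inner{\nabla\Omega(\pie(\cdot|s))}{\pi(\cdot|s)-\pie(\cdot|s)}{\A}\big)$, which is exactly $-D_\Omega(\pi(\cdot|s)\,\|\,\pie(\cdot|s))$, the Bregman divergence generated by $\Omega$. Therefore $J_\Omega(t(\cdot,\cdot;\pie),\pi)=-\tfrac{1}{1-\gamma}\,\E_{s\sim d_\pi}\big[D_\Omega(\pi(\cdot|s)\,\|\,\pie(\cdot|s))\big]$, which also confirms the motivating statement that regularized RL with this reward is the divergence-minimization problem over $\pi$ and $\pie$.

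From here the conclusion is immediate. Since $\Omega$ is (strongly) convex, $D_\Omega(\cdot\,\|\,\cdot)\ge 0$, so $J_\Omega(t(\cdot,\cdot;\pie),\pi)\le 0$ for all $\pi$; and since $D_\Omega(p\,\|\,p)=0$, taking $\pi=\pie$ gives $J_\Omega(t(\cdot,\cdot;\pie),\pie)=0$. Hence $\max_\pi J_\Omega(t(\cdot,\cdot;\pie),\pi)=0=J_\Omega(t(\cdot,\cdot;\pie),\pie)$, so the objective in \eqref{EQRIRL} evaluated at $r=t(\cdot,\cdot;\pie)$ equals $0$, which is its maximal value by the feasibility bound above. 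This proves the lemma.

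I expect the only delicate point to be the bookkeeping in the computation of $J_\Omega(t(\cdot,\cdot;\pie),\pi)$: correctly splitting $d_\pi(s,a)=d_\pi(s)\pi(a|s)$, keeping track of which terms are constant in $a$ when passing to $\E_{a\sim\pi(\cdot|s)}$, and getting the sign right so that the Bregman divergence appears in the orientation $D_\Omega(\pi(\cdot|s)\,\|\,\pie(\cdot|s))$. Once that identity is in hand, non-negativity of the Bregman divergence and the feasibility bound finish the proof with no further work. I would also state explicitly that differentiability of $\Omega$ is used, so that $\nabla\Omega$ and hence $t(\cdot,\cdot;\pi)$ are well-defined for the regularizers under consideration.
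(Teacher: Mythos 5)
Your proposal is correct and follows essentially the same route as the paper: substitute $r=t(\cdot,\cdot;\pie)$, take the inner expectation over $a\sim\pi(\cdot|s)$ first, and observe that the per-state integrand collapses to $-D_\Omega^\A(\pi(\cdot|s)\,\|\,\pie(\cdot|s))$, so the RL objective is maximized (at value $0$) exactly at $\pi=\pie$. The only cosmetic difference is that you work with the visitation-distribution form of \eqref{RL_OBJECTIVE} and make the ``objective is always $\le 0$, so attaining $0$ suffices'' step explicit, whereas the paper carries out the identical algebra on the discounted-sum form.
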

\begin{proof} (Abbreviated. See Appendix~\ref{proofoflemma1} for full version.)
With $r(s, a) = t(s, a;\pie)$, the RL objective in \eqref{RL_OBJECTIVE} becomes equivalent to \emph{a problem of minimizing the discounted sum of Bregman divergences between $\pi$ and $\pie$}
\begin{align}
    \argmin_{\pi\in\P_\S^\A}
    \E_\pi
    \left[
        \sum_{i=0}^\infty
        \gamma^i
        D_\Omega^{\A}(\pi(\cdot|s_i)||\pie(\cdot|s_i))
    \right],
    \label{BREGMAN_OBJECTIVE}
\end{align}
where $D_\Omega^{\A}$ is the Bregman divergence ~\citep{bregman1967relaxation} defined by 
$
D_\Omega^{\A}(\Blue{p_1}||\Blue{p_2})
=
\Omega(\Blue{p_1})
-
\Omega(\Blue{p_2})
-
\inner{\nabla\Omega(\Blue{p_2})}{\Blue{p_1}-\Blue{p_2}}{\A}
$
for $\Blue{p_1}, \Blue{p_2}\in\Delta^{\A}$.
Due to the non-negativity of the Bregman divergence, $\pi=\pie$ is a solution of \eqref{BREGMAN_OBJECTIVE} and is unique since \eqref{RL_OBJECTIVE} has the unique solution for arbitrary reward functions~\citep{geist2019theory}. 
\end{proof}
Especially for any policy regularizer $\Omega$ represented \Blue{by} an expectation over the policy~\citep{yang2019regularized}, \lemmaref{lemma1} can be reduced to the following solution in \corollaryref{corollary1}:
\begin{corollary}\label{corollary1} 
For
$
    \Omega(\Blue{p})
    =
    -\lambda\E_{a\sim\Blue{p}}\phi(\Blue{p}(a))
$ 
with
$    
    \Blue{p}\in\Delta^{\A}
$
~\citep{yang2019regularized}
,
\eqref{TARGET_REWARD} becomes
\begin{align}
    t(s, a; \pi)
    =
    -\lambda\cdot
    \left\{
        f_{\phi}'(\pi(a|s))
        -
        \E_{a'\sim\pi(\cdot|s)}
        [
            f_{\phi}'(\pi(a'|s))
            -
            \phi(\pi(a'|s))
        ]
    \right\}
    \label{REFORMED_TARGET_REWARD}
\end{align}
for $f_\phi'(x)=\frac{\partial}{\partial x}(x\phi(x))$. \Blue{The proof is in  Appendix~\ref{proofoflemma1}.}
\end{corollary}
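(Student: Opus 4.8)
The plan is to obtain the corollary as a direct specialization of \lemmaref{lemma1}: substitute the separable regularizer $\Omega(\Blue{p}) = -\lambda\,\E_{a\sim\Blue{p}}\phi(\Blue{p}(a))$ into the target reward \eqref{TARGET_REWARD} and simplify. First I would rewrite the regularizer in the form $\Omega(\Blue{p}) = -\lambda\sum_{a\in\A}f_\phi(\Blue{p}(a))$ with $f_\phi(x)=x\phi(x)$, so that its gradient decouples coordinatewise: $[\nabla_{\Blue{p}}\Omega(\Blue{p})]_a = -\lambda\,f_\phi'(\Blue{p}(a))$, using that $f_\phi$ is differentiable on $(0,1]$ under the assumed regularity of $\phi$. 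Evaluating at $\Blue{p}=\pi(\cdot|s)$ gives $\Omega'(s, a;\pi) = -\lambda\,f_\phi'(\pi(a|s))$, while directly $\Omega(\pi(\cdot|s)) = -\lambda\,\E_{a'\sim\pi(\cdot|s)}[\phi(\pi(a'|s))]$.

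Next I would plug these two expressions into \eqref{TARGET_REWARD}. The term $\Omega'(s,a;\pi)$ contributes $-\lambda\,f_\phi'(\pi(a|s))$; the term $-\E_{a'\sim\pi(\cdot|s)}[\Omega'(s,a';\pi)]$ contributes $+\lambda\,\E_{a'\sim\pi(\cdot|s)}[f_\phi'(\pi(a'|s))]$; and the term $\Omega(\pi(\cdot|s))$ contributes $-\lambda\,\E_{a'\sim\pi(\cdot|s)}[\phi(\pi(a'|s))]$. Factoring out $-\lambda$ and merging the two expectations over $a'$ into a single $\E_{a'\sim\pi(\cdot|s)}[f_\phi'(\pi(a'|s)) - \phi(\pi(a'|s))]$ gives exactly \eqref{REFORMED_TARGET_REWARD}. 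Since $t(s,a;\pie)$ is a solution of regularized IRL by \lemmaref{lemma1}, so is the reformulated expression, which is the claim.

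The computation is pure bookkeeping, so I do not anticipate a real obstacle; the only point worth a remark is the meaning of the gradient on the probability simplex. As already adopted in \lemmaref{lemma1}, $\nabla_{\Blue{p}}\Omega(\Blue{p})$ is understood as the coordinatewise derivative of the natural extension $\Blue{p}\mapsto -\lambda\sum_{a\in\A}f_\phi(\Blue{p}(a))$ to a neighborhood of $\Delta^\A$ in $\R^\A$, which is unambiguous for this separable form; moreover \eqref{TARGET_REWARD} only uses $\Omega'$ through the centered difference $\Omega'(s,a;\pi) - \E_{a'\sim\pi(\cdot|s)}[\Omega'(s,a';\pi)]$, so no contribution from the normalization constraint survives in any case. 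I would present this derivation in Appendix~\ref{proofoflemma1} right after \eqref{TARGET_REWARD} is established, so that the shared notation is in place.
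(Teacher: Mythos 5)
Your derivation is correct and is essentially identical to the paper's own proof in Appendix~\ref{proofofcorollary1}: both rewrite $\Omega$ as $-\lambda\sum_{a}f_\phi(\pi(a|s))$, take the coordinatewise gradient to get $\Omega'(s,a;\pi)=-\lambda f_\phi'(\pi(a|s))$, substitute into \eqref{TARGET_REWARD}, and merge the two expectations after factoring out $-\lambda$. Your closing remark on interpreting the simplex gradient via the natural extension to $\R^\A$ is a harmless clarification the paper leaves implicit.
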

\vspace{-0.05in}
Throughout the paper, we denote \textbf{reward baseline} by the expectation 
$
\E_{a\sim\pi(\cdot|s)}
[
    f_{\phi}'(\pi(a|s))
    -
    \phi(\pi(a|s))
]
$.
\Blue{Note that for} \emph{continuous control tasks} with 
$
    \Omega(\Blue{p})
    =
    -\lambda\E_{a\sim\Blue{p}}\phi(\Blue{p}(a))
$, we can obtain the same form of the reward in~\eqref{REFORMED_TARGET_REWARD} (The proof is in Appendix~\ref{proofforcontinuouscontrol}). \Blue{Although the reward baseline is generally not a intractable in continuous control tasks, we derive a tractable reward baseline for a special case (See Section~\ref{IRLTSALLIS}).} 
Additionally, when $\lambda=1$ and $\phi(x)=-\log x$, it can be shown that $t(s, a;\pi)=\log\pi(a|s)$\Blue{---for both discrete and continuous control problems---}which was used as a reward objective in the previous work~\citep{fu2018learning}, and that the Bregman divergence in \eqref{BREGMAN_OBJECTIVE} becomes the KL divergence $\mathrm{KL}(\pi(\cdot|s)||\pie(\cdot|s))$. 

Additional solutions of regularized IRL can be found by shaping $t(s, a; \pie)$ as stated in the following lemma:
\begin{lemma}[Potential-based reward shaping]
\label{lemma2}
Let $\pi^*$ be the solution of \eqref{RL_OBJECTIVE} in a regularized MDP $\M^r$ with a regularizer $\Omega:\Delta^{\A}\rightarrow\R$ and a reward function $r\in\R^\SA$.
Then \Blue{for $\Phi\in\R^\S$, using either $r(s, a) +\gamma \Phi(s')- \Phi(s)$ or $r(s, a) +\E_{s'\sim P(\cdot|s, a)}\Phi(s') - \Phi(s)$ as a reward} does not change the solution of \eqref{RL_OBJECTIVE}. 
\Blue{The proof is in Appendix~\ref{proofoflemma2}.}
\end{lemma}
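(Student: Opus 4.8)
The plan is to show that replacing $r$ by either shaped reward changes the objective $J_\Omega(\cdot,\pi)$ only by an additive term that does not depend on $\pi$; since a regularized MDP admits a \emph{unique} optimal policy~\citep{geist2019theory}, shifting $J_\Omega$ by a $\pi$-independent constant leaves the maximizer---hence the solution $\pi^*$---unchanged.

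Concretely, I would first treat the state-dependent form $r'(s,a,s') := r(s,a) + \gamma\Phi(s') - \Phi(s)$ (with the natural reading of $J_\Omega$ for a reward that also depends on the next state) and expand
\begin{align*}
    J_\Omega(r', \pi)
    =
    \E_\pi\!\left[\sum_{i=0}^\infty \gamma^i\big(r(s_i,a_i) - \Omega(\pi(\cdot|s_i))\big)\right]
    +
    \E_\pi\!\left[\sum_{i=0}^\infty \gamma^i\big(\gamma\Phi(s_{i+1}) - \Phi(s_i)\big)\right].
\end{align*}
The first summand is exactly $J_\Omega(r,\pi)$, since reward shaping leaves the regularizer contribution untouched. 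For the second summand I would use the finite telescoping identity $\sum_{i=0}^N \gamma^i(\gamma\Phi(s_{i+1}) - \Phi(s_i)) = \gamma^{N+1}\Phi(s_{N+1}) - \Phi(s_0)$; because $\S$ is finite, $\Phi$ is bounded, and because $\gamma\in[0,1)$ the boundary term $\gamma^{N+1}\Phi(s_{N+1})$ vanishes as $N\to\infty$, so dominated convergence lets me pass the limit inside the expectation, yielding $\E_\pi[-\Phi(s_0)] = -\inner{P_0}{\Phi}{\S}$. Hence $J_\Omega(r',\pi) = J_\Omega(r,\pi) - \inner{P_0}{\Phi}{\S}$.

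For the expectation form, I would observe that $r(s,a) + \E_{s'\sim P(\cdot|s,a)}\Phi(s') - \Phi(s)$ agrees with the conditional expectation of the state-dependent shaping reward given $(s,a)$; applying the tower property term by term (conditioning the $i$-th term on $(s_i,a_i)$) shows that $J_\Omega$ takes the same value on this reward as on $r'$, so the same $\pi$-independent shift $-\inner{P_0}{\Phi}{\S}$ applies. Finally, since $\inner{P_0}{\Phi}{\S}$ does not depend on $\pi$, we conclude $\argmax_{\pi\in\P_\S^\A} J_\Omega(r',\pi) = \argmax_{\pi\in\P_\S^\A} J_\Omega(r,\pi)$, and by uniqueness of the optimizer in regularized MDPs the solution $\pi^*$ is exactly preserved.

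I do not expect a deep obstacle here; the one step that needs care is justifying the interchange of the infinite sum and the limit with the expectation and verifying that the telescoping boundary term $\gamma^{N+1}\Phi(s_{N+1})$ genuinely vanishes---this is exactly where finiteness of $\S$ and $\gamma<1$ are used---together with the bookkeeping that the first shaped reward lives on $\S\times\A\times\S$ while the second lives on $\S\times\A$, the two being linked by the conditional expectation above.
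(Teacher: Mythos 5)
Your proof is correct, but it takes a genuinely different route from the paper's. The paper follows the strategy of \citet{ng1999policy} at the level of the regularized Bellman optimality operator: it first reduces both shaped rewards to $\tilde r(s,a)=r(s,a)+\E_{s'\sim P(\cdot|s,a)}\Phi(s')-\Phi(s)$ (the same reduction you make via the tower property), and then shows by direct substitution that $V^{*,r}-\Phi$ is the fixed point of the shaped optimality operator $T^{*,\tilde r}$ with the same per-state maximizer $\pi^*$. You instead argue directly on the return objective $J_\Omega$, using the telescoping identity $\sum_{i=0}^N\gamma^i(\gamma\Phi(s_{i+1})-\Phi(s_i))=\gamma^{N+1}\Phi(s_{N+1})-\Phi(s_0)$ to show that shaping shifts $J_\Omega(\cdot,\pi)$ by the $\pi$-independent constant $-\E_{s_0\sim P_0}[\Phi(s_0)]$, so the argmax in \eqref{RL_OBJECTIVE} is untouched; your attention to the vanishing boundary term and the interchange of limit and expectation is exactly the right place to be careful, and both are justified by finiteness of $\S$ and $\gamma<1$. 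Your argument is more elementary and self-contained, needing only the uniqueness of the optimizer rather than the regularized Bellman machinery of Definition~\ref{definition1}. What the paper's fixed-point argument buys in exchange is the explicit relation $V^{*,\tilde r}=V^{*,r}-\Phi$ between optimal value functions, which the paper reuses later (in Appendix~\ref{intractable_form_solution} and the subsequent derivation, where $\Phi(s)=\Omega^*(\Qe(s,\cdot))$ is chosen precisely to cancel the dynamics-dependent term in the solution of \citet{geist2019theory}); your telescoping argument establishes policy invariance without directly exhibiting that value-function shift.
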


From \lemmaref{lemma1} and \lemmaref{lemma2}, we prove the sufficient condition of rewards being solutions of the IRL problem. However, the necessary condition---a set of those solutions are the only possible solutions for an arbitrary MDP---is not proved~\citep{ng1999policy}.

\Blue{In the following lemma, we check how the proposed solution}
is related to the normalized state-visitation distribution which can be discussed in the line of distribution matching perspective on imitation learning problems~\citep{ho2016generative,fu2018learning,ke2019imitation,ghasemipour2019divergence}:

\begin{lemma}\label{lemma3}
Given the policy regularizer $\Omega$,
let us define
$
    \bOmega(d)
    :=
    \E_{(s, a)\sim d}[\Omega(\bar{\pi}_d(\cdot|s))]
$
for an arbitrary normalized state-action visitation distribution $d\in\Delta^\SA$ and the policy $\bar{\pi}_d(a|s):=\frac{d(s, a)}{\sum_{a'\in\A}d(s, a')}$ induced by $d$. Then, \eqref{TARGET_REWARD} is equal to
\begin{align}
    t(s, a;\bar{\pi}_d)
    =
    [\nabla\bOmega(d)](s, a).
    \label{EQSOLUTIONREW}
\end{align}
When $\bOmega(d)$ is strictly convex and a solution
$
    t(s, a;\pie)
    =
    \nabla\bOmega(d_{\pie})
$ of IRL in \eqref{EQSOLUTIONREW} is used, the RL objective in $\eqref{RL_OBJECTIVE}$ is equal to
\begin{align*}
    \argmin_{\pi\in\P^\A_\S} D_{\bOmega}^{\SA}(d_\pi||d_{\pie}),
\end{align*}
where $D_{\bOmega}^{\SA}$ is \Blue{the} Bregman divergence among visitation distributions defined by
$
    D_{\bOmega}^{\SA}(d_1||d_2)
    =
    \bOmega(d_1)
    -
    \bOmega(d_2)
    -
    \langle
        \nabla\bOmega(d_2),
        d_1-d_2
    \rangle
$
for visitation distributions $d_1$ and $d_2$.
The proof is in Appendix~\ref{proofoflemma3}.
\end{lemma}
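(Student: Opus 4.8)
The plan is to prove the two assertions of the lemma in turn: first the pointwise gradient identity $t(s,a;\bar{\pi}_d)=[\nabla\bOmega(d)](s,a)$ of \eqref{EQSOLUTIONREW}, and then the reduction of the regularized RL objective to a Bregman-divergence minimization over visitation distributions. The key structural observation for the first part is that $\bOmega$ is a sum over states of the \emph{perspective} (positively homogeneous extension) of $\Omega$: writing $d(s):=\sum_{a'\in\A}d(s,a')$ for the state-marginal of $d$, we have $\bOmega(d)=\sum_{s}d(s)\,\Omega\!\big((d(s,a)/d(s))_{a\in\A}\big)$, since $\bar{\pi}_d(a|s)=d(s,a)/d(s)$.

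For the gradient identity I would fix a pair $(s_0,a_0)$ and note that only the $s=s_0$ summand of $\bOmega(d)$ depends on $d(s_0,a_0)$. Differentiating that summand by the chain rule through (i) the outer factor $d(s_0)$, with $\partial d(s_0)/\partial d(s_0,a_0)=1$, and (ii) the argument $\bar{\pi}_d(\cdot|s_0)$, with $\partial\bar{\pi}_d(a|s_0)/\partial d(s_0,a_0)=\big(\I\{a=a_0\}-\bar{\pi}_d(a|s_0)\big)/d(s_0)$, the $d(s_0)$ factors cancel and one is left with $[\nabla\bOmega(d)](s_0,a_0)=\Omega(\bar{\pi}_d(\cdot|s_0))+\Omega'(s_0,a_0;\bar{\pi}_d)-\E_{a'\sim\bar{\pi}_d(\cdot|s_0)}[\Omega'(s_0,a';\bar{\pi}_d)]$, which is exactly the right-hand side of \eqref{TARGET_REWARD} evaluated at $\pi=\bar{\pi}_d$. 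This gives \eqref{EQSOLUTIONREW}.

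For the second claim I would start from the $d_\pi$-form of the regularized return in \eqref{RL_OBJECTIVE}, $(1-\gamma)J_\Omega(r,\pi)=\E_{(s,a)\sim d_\pi}[r(s,a)-\Omega(\pi(\cdot|s))]$, and use the standard fact that a policy is recovered from its own visitation distribution, $\bar{\pi}_{d_\pi}(\cdot|s)=\pi(\cdot|s)$ on every state with $d_\pi(s)>0$ (states of zero visitation contribute nothing to the expectation), so that $\E_{(s,a)\sim d_\pi}[\Omega(\pi(\cdot|s))]=\bOmega(d_\pi)$. Substituting the IRL reward $r(s,a)=t(s,a;\pie)$, which by \eqref{EQSOLUTIONREW} applied with $d=d_{\pie}$ (so $\bar{\pi}_{d_{\pie}}=\pie$) equals $[\nabla\bOmega(d_{\pie})](s,a)$, yields $(1-\gamma)J_\Omega(r,\pi)=\langle\nabla\bOmega(d_{\pie}),d_\pi\rangle-\bOmega(d_\pi)$. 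Adding and subtracting the $\pi$-independent quantities $\bOmega(d_{\pie})$ and $\langle\nabla\bOmega(d_{\pie}),d_{\pie}\rangle$ rewrites the right-hand side as $-D_{\bOmega}^{\SA}(d_\pi||d_{\pie})$ up to a constant, so $\argmax_{\pi}J_\Omega(r,\pi)=\argmin_{\pi}D_{\bOmega}^{\SA}(d_\pi||d_{\pie})$; strict convexity of $\bOmega$ makes the Bregman term nonnegative with equality iff $d_\pi=d_{\pie}$, consistent with the uniqueness of the optimal policy from \citet{geist2019theory}.

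I expect the gradient computation to be the only delicate point: one must correctly account for the dependence of $\bar{\pi}_d$ on $d$ through both the numerator $d(s,a)$ and the normalizer $d(s)$, and verify that the outer-factor derivative is precisely what produces the extra $\Omega(\bar{\pi}_d(\cdot|s))$ term that upgrades $\nabla\Omega$ to the full reward $t$ of \eqref{TARGET_REWARD}. The remaining steps — identifying $\bar{\pi}_{d_\pi}=\pi$, collecting the constant terms, and checking that all quantities are well defined on the support of the relevant visitation distributions — are routine.
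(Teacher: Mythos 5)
Your proposal is correct and follows essentially the same route as the paper's Appendix~\ref{proofoflemma3}: the gradient identity is obtained by differentiating the perspective form $\sum_s d(s)\,\Omega(\bar{\pi}_d(\cdot|s))$ through both the normalizer and the numerator (your scalar chain-rule computation of $\partial\bar{\pi}_d(a|s)/\partial d(s,a_0)$ is exactly the paper's matrix identity for $\partial\ppi(\pd^s)/\partial\pd^s$), and the second claim is the same completion of $\langle\nabla\bOmega(d_{\pie}),d_\pi\rangle-\bOmega(d_\pi)$ to a Bregman divergence via the policy--visitation correspondence. The only cosmetic difference is that the paper routes the second part through the convex conjugate $\bOmega^*$ and the IRL objective, which adds nothing essential to your direct argument.
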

\vspace{-0.05in}
\Blue{Note that the strict convexity of $\bOmega$ is required for $D_{\bOmega}^{\SA}$ to become a valid Bregman divergence.}
Although the strict convexity of a policy regularizer $\Omega$ does not guarantee \Blue{the} assumption on the strict convexity of $\bOmega$, it has been shown to be true for \Blue{Shannon entropy regularizer ($\bOmega=-\bar{H}$ of \textbf{Lemma 3.1} in~\citet{ho2016generative}) and} Tsallis entropy \Blue{regularizer with its constants $k=\frac{1}{2}, q=2$ ($\bOmega=-\bar{W}$ of \textbf{Theorem 3} in~\citet{lee2018maximum}).}

\subsection{IRL with Tsallis entropy regularization and Gaussian policies}
~\label{IRLTSALLIS}
\vspace{-0.1in}

For continuous controls, multi-variate Gaussian policies are \Blue{often} used in practice~\citep{schulman2015trust,schulman2017proximal} and we consider IRL problems with those policies in this subsection. In particular, we consider IRL with Tsallis entropy regularizer 
$\Omega(\Blue{p})=-\mathcal{T}_q^k(\Blue{p})=-\E_{a\sim\Blue{p}}[\frac{k}{q-1}(1-\Blue{p(a)}^{q-1})]$~\citep{lee2018maximum,yang2019regularized,lee2020generalized}
for a multi-variate Gaussian policy $\pi(\cdot|s)=\mathcal{N}\left(\pmb{\mu}(s), \pmb{\Sigma}(s)\right)$ with $\pmb{\mu}(s)=[\mu_1(s), ..., \mu_d(s)]^T$,  $\pmb{\Sigma}(s)=\mathrm{diag}\{(\sigma_1(s))^2, ..., (\sigma_d(s))^2\}$. 
In such a case, we can obtain \Blue{tractable} forms of the following quantities:

\textbf{Tsallis entropy.} 
The \Blue{tractable} form of Tsallis entropy for a multi-variate Gaussian policy is
\begin{align*}
    \mathcal{T}_q^k(\pi(\cdot|s))
    =
    \frac{
        k(1-e^{(1-q)\mathcal{R}_q(\pi(\cdot|s))})
    }{
        q-1
    },
    \mathcal{R}_q(\pi(\cdot|s))
    =
    \sum_{i=1}^d
    \left\{
        \log(\sqrt{2\pi}\sigma_i(s))-\frac{\log q}{2(1-q)}
    \right\}
\end{align*}
for Renyi entropy $\mathcal{R}_q$. Its derivation is given in Appendix~\ref{appendix:tsallis}. 

\textbf{Reward baseline.}
\Blue{The} reward baseline term 
$
\E_{a\sim\pi(\cdot|s)}
    [
        f_{\phi}'(\pi(a|s))
        -
        \phi(\pi(a|s))
    ]
$
in~\corollaryref{corollary1} is generally intractable except for either discrete control problems or \Blue{Shannon entropy regularization where the reward baseline is equal to $-1$.}
Interestingly, as long as the \Blue{tractable} form of Tsallis entropy can be derived, that of the corresponding reward baseline can also be derived since the reward baseline satisfies
\begin{align*}
    \E_{a\sim\pi(\cdot|s)}
    [
        f_{\phi}'(\pi(a|s))
        -
        \phi(\pi(a|s))
    ]
    =
    (q-1)
    \E_{a\sim\pi(\cdot|s)}
    [
        \phi(\pi(a|s)
    ]
    -k
    =(q-1)\mathcal{T}_q^k(\pi(\cdot|s))-k.
\end{align*}
Here, the first equality holds with $f_\phi'(x)=\frac{k}{q-1}(1-qx^{q-1})=q\phi(x)-k$ for Tsallis entropy regularization.

\textbf{Bregman divergence associated with Tsallis entropy.}
For two different multivariate Gaussian policies, we derive the \Blue{tractable} form of the Bregman divergence (associated with Tsallis entropy) between two policies. The resultant divergence has a complicated form, so we leave it in Appendix~\ref{bregman_tsallis} with its derivation.

\begin{center}
    \begin{figure}[h]
        \vspace{-10pt}
        \centering
        \includegraphics[width=\textwidth]{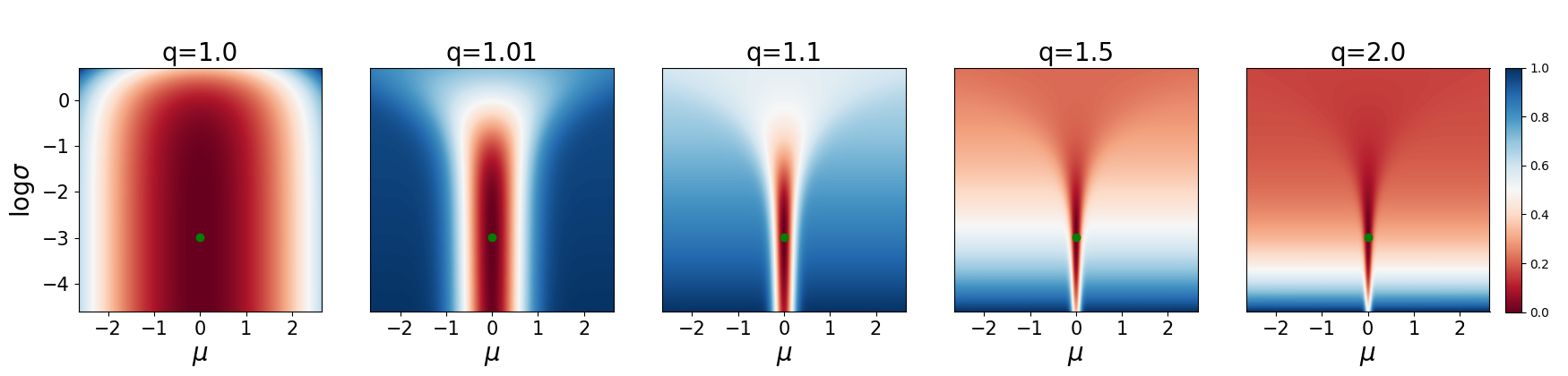}
        \vspace{-20pt}
        \caption{
        Bregman divergence $D_\Omega(\pi||\pie)$ associated with Tsallis entropy ($\Omega(\Blue{p})=-\mathcal{T}_q^1(\Blue{p})$)
        between two uni-variate Gaussian distributions 
        $\pi=\mathcal{N}(\mu, \sigma^2)$
        and    
        $\pie=\mathcal{N}(0, (e^{-3})^2)$ (green point in each subplot). In each subplot, we normalized the Bregman divergence so that the maximum value becomes 1. Note that for $q=1$, $D_\Omega(\pi||\pie)$ becomes the KL divergence $\mathrm{KL}(\pi||\pie)$.
        }
        \label{fig:bregmandivergencegaussian}
        \vspace{-20pt}
    \end{figure}
\end{center}
For deeper understanding of Tsallis entropy and its associated Bregman divergence in regularized MDPs, we consider an example in Figure~\ref{fig:bregmandivergencegaussian}. We first assume that both learning agents' and experts' policies follow uni-variate Gaussian distributions $\pi=\mathcal{N}(\mu, \sigma^2)$ and $\pie=\mathcal{N}(0, (e^{-3})^2)$, respectively. We then evaluate the Bregman divergence \Blue{in Figure~\ref{fig:bregmandivergencegaussian}} by using \Blue{its} \Blue{tractable} form \Blue{and} varying $q$ from 1.0---which corresponds to the KL divergence---to 2.0. We observe that the constant $q$ from the Tsallis entropy affects the sensitivity of the associated Bregman divergence w.r.t. the mean and standard deviation of
the learning agent's policy $\pi$.
Specifically, as $q$ increases, the size of the valley--- relatively red regime in Figure~\ref{fig:bregmandivergencegaussian}---across the $\mu$-axis and $\log\sigma$-axis decreases. This suggests that for larger $q$, minimizing the Bregman divergence requires more tightly matching means and variances of $\pi$ and $\pie$.




\vspace{-0.1in}
\section{Algorithmic Consideration}
\label{sec:algorithm}
\begin{center}
\begin{algorithm}[t]
\caption{Regularized Adversarial Inverse Reinforcement Learning (RAIRL)}
\label{algorithm:reg_airl}
\begin{algorithmic}[1]
\STATE 
\textbf{Input:} 
A set $\De$ of expert demonstration generated by expert policy $\pie$, a reward approximator $r_\theta$, a policy $\pi_\psi$ for neural network parameters $\theta$ and $\psi$
\FOR{each iteration}
    \STATE
    Sample rollout trajectories by using the learners policy $\pi_\psi$.
    \STATE
    Optimize $\theta$ with the discriminator $D_{r_\theta, \pi_\psi}$ and the learning objective in~\eqref{eq:obj_rew}.
    \STATE
    Optimize $\psi$ with Regularized Actor Critic by using $r_\theta$ as a reward function.
\ENDFOR
\STATE
\textbf{Output:}
$\pi_\psi\approx\pie$, 
$r_\theta(s, a)\approx t(s, a;\pie)$ (a solution of the IRL problem in~\lemmaref{lemma1}).
\end{algorithmic}
\end{algorithm}
\end{center}

\Blue{Based on} a solution \Blue{for regularized IRL in the previous section,} we focus on developing an IRL algorithm in this section.
\Blue{Particularly to} recover the reward function $t(s, a;\pie)$ in~\lemmaref{lemma1}, we design an adversarial training objective as follows. 
\Blue{Motivated by} AIRL~\citep{fu2018learning}, we consider the following structured discriminator associated with $\pi, r$ and $t$ in~\lemmaref{lemma1}:
\begin{align*}
    D_{r, \pi}(s, a)
    =
    \sigma(r(s, a)-t(s, a;\pi)), 
    \sigma(z)=\frac{1}{1+e^{-z}}, z\in\R.
\end{align*}
Note that we can recover the discriminator of AIRL in~\eqref{eq:airl1} when
$t(s, a)=\log\pi(a|s)$ ($\phi(x)=\log x$ and $\lambda=1$).
Then, we consider the following optimization objective of the discriminator which is the same as that of AIRL:
\begin{align}
    \hat{t}(s, a;\pi):=\argmax_{r\in\R^\SA}~
    \E_{(s, a)\sim{d}_{\pie}}
    \left[ 
        \log D_{r, \pi}(s, a)
    \right] 
    +
    \E_{(s, a)\sim{d}_\pi}
    \left[ 
        \log (1-D_{r, \pi}(s, a))
    \right].
    \label{eq:obj_rew}
\end{align}
Since the function $x\mapsto a\log \sigma(x)+b\log(1-\sigma(x))$ attains its maximum at $\sigma(x)=\frac{a}{a+b}$, or equivalently at $x=\log\frac{a}{b}$ ~\citep{goodfellow2014generative,mescheder2017adversarial}, it can be shown that 
\begin{align}
    \hat{t}(s, a;\pi)
    =
    t(s, a;\pi)
    +
    \log\frac{{d}_{\pie}(s, a)}{{d}_\pi(s, a)}.
    \label{eq:maximizer}
\end{align}
When $\pi=\pie$ in \eqref{eq:maximizer}, we have
$\hat{t}(s, a;\pie)=t(s, a;\pie)$ since ${d}_\pi={d}_{\pie}$, which means the maximizer $\hat{t}$ becomes the solution of IRL \Blue{\emph{after}} the agent successfully imitates the expert policy $\pie$.
To do so, we consider the following iterative algorithm.
Assuming that we find out the optimal \Blue{reward approximator} $\hat{t}(s, a;\pi^{(i)})$ in \eqref{eq:maximizer} for the policy $\pi^{(i)}$ of the $i$-th iteration, we get the policy $\pi^{(i+1)}$ by optimizing the following objective with gradient ascent:
\begin{align}
    \Blue{
    \underset{\pi\in\P^\A_\S}{\mathrm{maximize}}~
    }
    \E_{(s, a)\sim d_\pi}
    \left[
        \hat{t}(s, a;\pi^{(i)})
        -
        \Omega(\pi(\cdot|s))
    \right].
    \label{eq:policy_optimization_method1}
\end{align}
\Blue{The above} expectation in \eqref{eq:policy_optimization_method1} can be decomposed into the following two \Blue{terms}
\begin{align}
    \E_{(s, a)\sim d_\pi}
    \left[
        \hat{t}(s, a;\pi^{(i)})
        -
        \Omega(\pi(\cdot|s))
    \right]
    &=
    \E_{(s, a)\sim d_\pi}
    \left[
        t(s, a;\pi^{(i)})
        -
        \Omega(\pi(\cdot|s))
    \right]
    -
        \mathrm{KL}(d_\pi||d_{\pie})\nonumber\\
    &=
    -
    \underbrace{
    \E_{(s, a)\sim d_\pi}
    \left[
        D_\Omega^\A(\pi(\cdot|s)||\pi^{(i)}(\cdot|s))
    \right]
    }_{\textrm{(I)}}
    -
    \underbrace{
        \mathrm{KL}(d_\pi||d_{\pie})
    }_{\textrm{(II)}},
    \label{eq:policy_optimization_method2}
\end{align}
where the second equality follows since \lemmaref{lemma1} \Blue{tells} us that $t(s, a;\pi^{(i)})$ is a reward function that makes 
$\pi^{(i)}$ an optimal policy in the $\Omega$-regularized MDP. 
Minimizing term \textrm{(II)} in \eqref{eq:policy_optimization_method2} 
makes $\pi^{(i+1)}$ close to $\pie$ while minimizing term \textrm{(I)} can be regarded as a conservative policy optimization around the policy $\pi^{(i)}$~\citep{schulman2015trust}.

In practice, we parameterize our reward and policy approximations with neural networks and train them using an off-policy Regularized Actor-Critic (RAC)~\citep{yang2019regularized} as described in \textbf{Algorithm~\ref{algorithm:reg_airl}}.
We evaluate our Regularized Adversarial Inverse Reinforcement Learning (RAIRL) approach across various scenarios, below.


\section{Experiments}
\label{experiments}
\vspace{-0.05in}
We summarize the experimental setup as follows.
In our experiments, we consider $\Omega(\Blue{p})=-\lambda\E_{a\sim\Blue{p}}[\phi(\Blue{p}(a)]$ with the following regularizers from~\citet{yang2019regularized}:
\emph{(1) Shannon entropy}
($\phi(x)=-\log x$),
\emph{(2) Tsallis entropy regularizer}
($\phi(x)=\frac{k}{q-1}(1-x^{q-1})$),
\emph{(3) $\mathrm{exp}$ regularizer}
($\phi(x)=e-e^x$),
\emph{(4) $\mathrm{cos}$ regularizer}
($\phi(x)=\cos(\frac{\pi}{2}x)$),
\emph{(5) $\mathrm{sin}$ regularizer}
($\phi(x)=1 - \sin\frac{\pi}{2}x$). 
\Blue{The above regularizers were chosen since other regularizers have not been empirically validated to the best of our knowledge. We chose those regularizers to make our empirical analysis more tractable.}
In addition, we model the reward approximator \Blue{of RAIRL} as a neural network with either one of the following models: \emph{(1) Non-structured model (NSM)}---a simple feed-forward neural network that outputs real values used in AIRL~\citep{fu2018learning}---and \emph{(2) Density-based model (DBM)}---a model using a neural network for $\pi$ (softmax for discrete controls and multi-variate Gaussian model for continuous controls) of the solution in~\eqref{corollary1} \Blue{(See Appendix~\ref{sec:dbm} for detailed explanation)}. \Blue{For RL algorithm of RAIRL, we implement Regularized Actor Critic (RAC)~\citep{yang2019regularized} on top of SAC implementation of  Rlpyt~\citep{stooke2019rlpyt}.}
Other settings are summarized in Appendix~\ref{experiment_setting}. For all experiments, we use 5 runs and report 95\% confidence interval.

\subsection{Experiment 1: Multi-armed Bandit (Discrete Action)}

We consider a 4-armed bandit environment as shown in Figure~\ref{figure_bandit} (left).
An expert policy $\pie$ is assumed to be either \emph{dense} (with probability 0.1, 0.2, 0.3, 0.4 for $a=0, 1, 2, 3$) or \emph{sparse} (with probability 0, 0, 1/3, 2/3 for  $a=0, 1, 2, 3$).
For those experts, we use RAIRL with actions sampled from $\pie$ and compare learned rewards with the ground truth reward $t(s, a;\pie)$ in~\lemmaref{lemma1}.
When $\pie$ is dense, RAIRL successfully acquires the ground truth rewards irrespective of the reward model choices. 
When sparse $\pie$ is used, however, RAIRL with a non-structured model (RAIRL-NSM) failed to recover the rewards for $a=0, 1$---where $\pie(a)=0$---due to the lack of samples at the end of the imitation. 
On the other hand, RAIRL with a density-based model (RAIRL-DBM) can recover the correct rewards due to the softmax layer which maintains the sum over the outputs equal to 1. Therefore, we argue that using DBM is necessary for correct reward acquisition since a set of demonstration is generally sparse. In the following experiment, we show the choice of reward models indeed affects the performance of rewards. 
\begin{center}
    \begin{figure}[h]
        \centering
        \includegraphics[width=\textwidth]{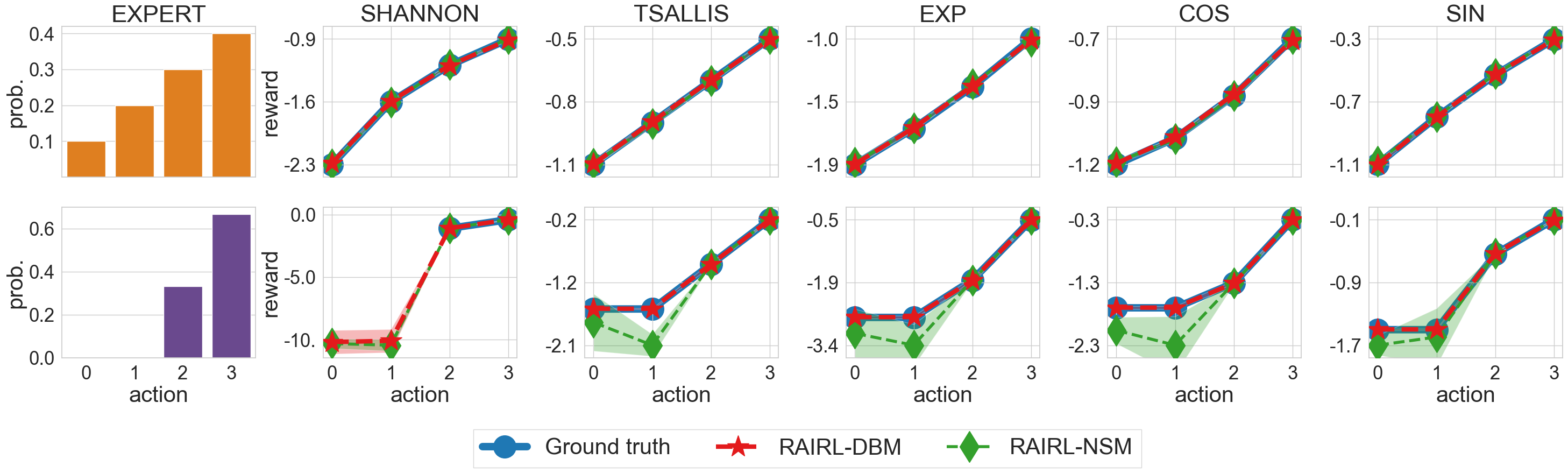}
        \caption{Expert \Blue{policy (\emph{Left}) and reward} learned by RAIRL with different types of policy regularizers \Blue{(\emph{Right}) in Multi-armed Bandit.} Either one of dense (\emph{Top row}) or sparse (\emph{Bottom row}) expert policies $\pie$ is considered.}
        \label{figure_bandit}
    \end{figure}
    \vspace{-25pt}
\end{center}

\subsection{Experiment 2: Bermuda World (Continuous Observation, Discrete Action)}

We consider an environment with a 2-dimensional continuous state space as described in Figure~\ref{fig:figure4}. 
At each episode, the learning agent is initialized uniformly on the $x$-axis between $-5$ and $5$, and there are 8 possible actions---an angle in 
$
\{
    -\pi,
    -\frac{3}{4}\pi,
    ...,
    \frac{1}{2}\pi,
    \frac{3}{4}\pi
\}
$ that determines the direction of movement.
An expert in Bermuda World considers 3 target positions $(-5, 10), (0, 10), (5, 10)$ and behaves stochastically. \Blue{We state how we mathematically define the expert policy $\pie$ in Appendix~\ref{expert_in_bermuda}.}
\Blue{During RAIRL's training (Figure~\ref{fig:figure4}, \emph{Top row}), we use 1000 demonstrations sampled from the expert and periodically measure mean Bregman divergence, i.e., for $D_\Omega^\A(p_1||p_2)=
        \E_{a\sim p_1}
        [f_\phi'(p_2(a))-\phi(p_1(a))]
        -
        \E_{a\sim p_2}
        [f_\phi'(p_2(a))-\phi(p_2(a))]
        $,
\begin{align*}
    \frac{1}{N}\sum_{i=1}^ND_\Omega^\A(\pi(\cdot|s_i)||\pie(\cdot|s_i)).
\end{align*}
Here, the states $s_1, ..., s_N$ comes from 30 evaluation trajectories that are stochastically sampled from the agent's policy $\pi$---which is fixed during evaluation---in a separate evaluation environment. 
During the evaluation of learned reward (Figure~\ref{fig:figure4}, \emph{Bottom row}), we train randomly initialized agents with RAC and rewards acquired from RAIRL's training and check if mean Bregman divergence is properly minimized. We measure \emph{mean Bregman divergence} as was done in RAIRL's training.}

RAIRL-DBM is shown to minimize the target divergence more effectively compared to RAIRL-NSM \Blue{during reward evaluation} although both achieve comparable performances during RAIRL's training.  
Moreover, we substitute $\lambda$ with $1, 5, 10$ and observe that learning with $\lambda$ larger than 1 returns better rewards---only $\lambda=1$ was considered in AIRL~\citep{fu2018learning}.
Note that in all cases, the minimum divergence achieved by RAIRL is comparable with that of behavioral cloning (BC). This is because BC performs sufficiently well when many demonstration is given. We think the divergence of BC may be the near-optimal divergence that can be achieved with our policy neural network model.
\begin{center}
    \vspace{-10pt}
    \begin{figure}[h]
        \centering
        \includegraphics[width=\textwidth]{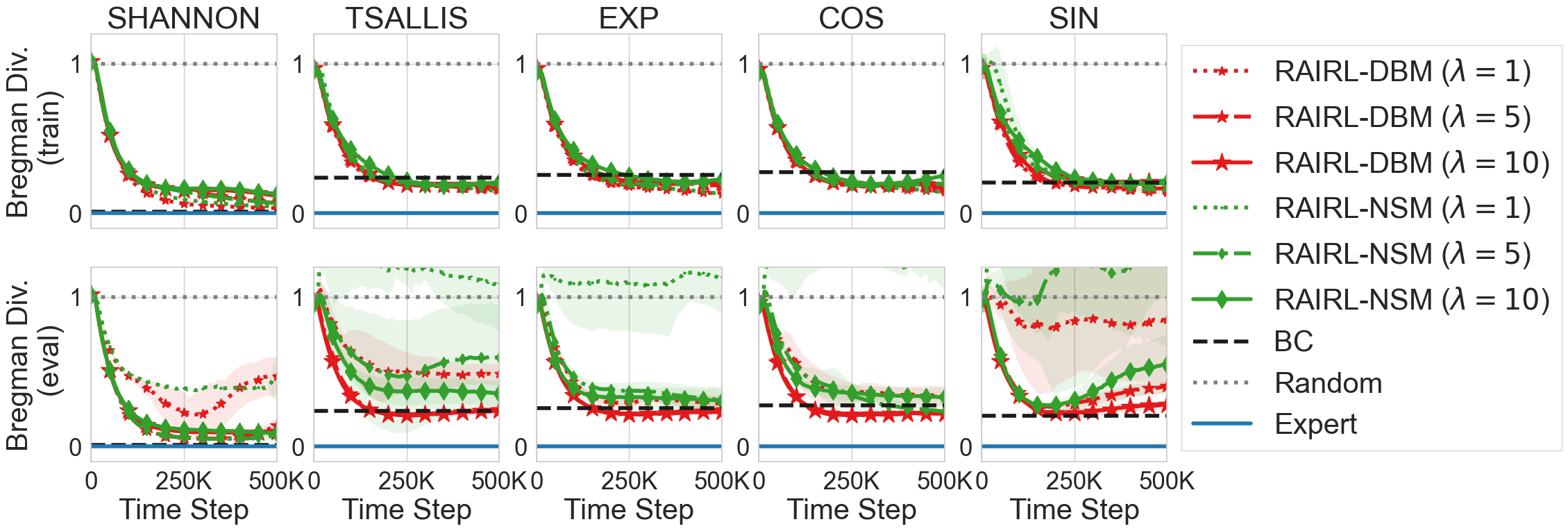}
        \caption{
        \Blue{
        Mean Bregman divergence during training (\emph{Top row}) and the divergence during reward evaluation (\emph{Bottom row}) in Bermuda World. In each column, different policy regularizers and their respective target divergences are considered.
        The results are reported after normalization with the divergence of uniform random policy, and that of behavioral cloning (BC) is reported for comparison.
        }
        }
        \label{fig:figure4}
    \end{figure}
    \vspace{-25pt}
\end{center}


\subsection{Experiment 3: MuJoCo (Continuous Observation and Action)}

We \Blue{validate} RAIRL on MuJoCo continuous control tasks (\emph{Hopper-v2}, \emph{Walker-v2}, \emph{HalfCheetah-v2}, \emph{Ant-v2}) as follows. 
We assume multivariate-Gaussian policies (with diagonal covairance matrices) for both learner's policy $\pi$ and expert policy $\pie$. Instead of $\mathrm{tanh}$-squashed policy in Soft-Actor Critic~\citep{haarnoja2018soft}, we use hyperbolized environments---where $\mathrm{tanh}$ is regarded as a part of the environment---with additional engineering on the policy networks (See Appendix~\ref{appendix_mujoco} for details).
We use 100 demonstrations \Blue{stochastically sampled} from $\pie$ to \Blue{validate} RAIRL.
In MuJoCo experiments, we focus on a set of Tsallis entropy regularizer (\Blue{$\Omega=-\mathcal{T}_q^1$}) with $q=1, 1.5, 2$---where Tsallis entropy becomes Shannon entropy for $q=1$.
We then exploit the \Blue{tractable} quantities for multi-variate Gaussian distributions in Section~\ref{IRLTSALLIS} to stabilize RAIRL and \Blue{check} its performance in terms of \Blue{mean Bregman divergence similar to the previous experiment. Note that since both $\pi$ and $\pie$ are multi-variate Gaussian and can be evaluated, we can evaluate the individual Bregman divergence $D_\Omega^\A(\pi(\cdot|s)||\pie(\cdot|s))$ on $s$ by using the derivation in Appendix~\ref{bregman_tsallis}.}

\Blue{The performances during RAIRL's training} are \Blue{described as follows.}
We \Blue{report} $\pi$ with both an episodic score (Figure~\ref{fig:mujoco_score}) and \Blue{mean Bregman divergences with respect to three types of Tsallis entropies} (Figure~\ref{fig:mujoco_bregman}) \Blue{$\Omega=-\mathcal{T}_{q'}^1$} with $q'=1, 1.5, 2$.
Note that the objective of RAIRL with $\Omega=-\mathcal{T}_{q}^\Blue{1}$ is to minimize the corresponding mean Bregman divergence with $q'=q$.
In Figure~\ref{fig:mujoco_score}, both RAIRL-DBM and RAIRL-NSM are shown to achieve the expert performance, irrespective of $q$, in \emph{Hopper-v2}, \emph{Walker-v2}, and \emph{HalfCheetah-v2}. In contrast, RAIRL in \emph{Ant-v2} fails to achieve the expert's performance within 2,000,000 steps and RAIRL-NSM highly outperforms RAIRL-DBM in our setting. 
Although the \Blue{episodic} scores are comparable for all methods in \emph{Hopper-v2}, 
\emph{Walker-v2}, and \emph{HalfCheetah-v2}, 
respective divergences are shown to be highly different from one another as shown in Figure~\ref{fig:mujoco_bregman}.
RAIRL with $q=2$ in most cases achieves the minimum mean Bregman divergence (for all three divergences with  $q'=1, 1.5, 2$), whereas RAIRL with $q=1$---which corresponds to AIRL~\citep{fu2018learning}---achieves the maximum divergence in most cases.
This result is in alignment with our intuition from  Section~\ref{IRLTSALLIS}; as $q$ increases, minimizing the Bregman divergence requires much tighter matching between $\pi$ and $\pie$.
\Blue{Unfortunately, while evaluating the acquired reward---RAC with randomly initialized agent and acquired reward---the target divergence is not properly decreased in continuous controls.} 
We believe this is because $\pi$ is a probability density function in continuous controls and causes large variance during training, while $\pi$ is a mass function and is well-bounded in discrete control problems.

\begin{center}
    \vspace{-10pt}
    \begin{figure}[h]
        \centering
        \includegraphics[width=\textwidth]{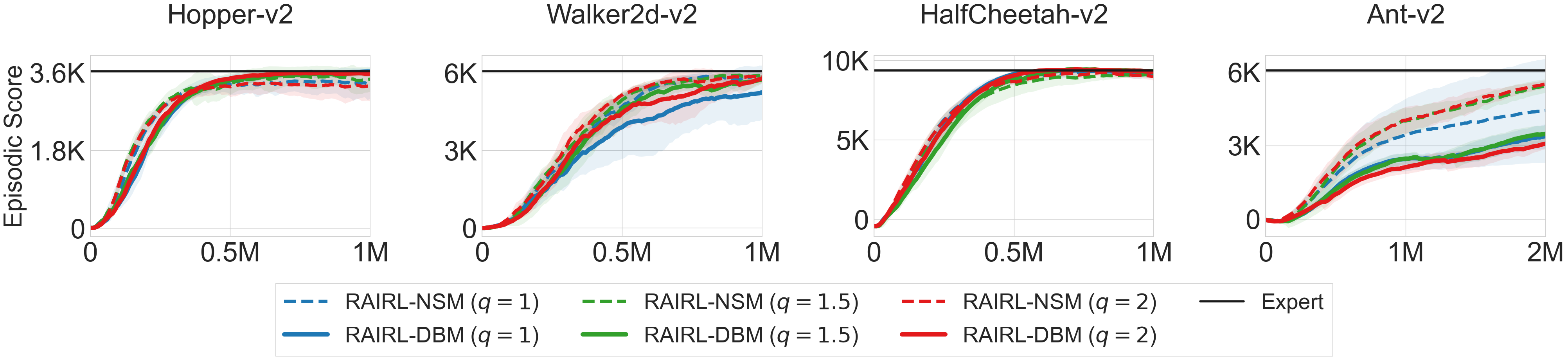}
    \caption{
    \Blue{
    Averaged episodic score of RAIRL's training in MuJoCo environments. RAIRL with $\mathcal{T}_{q}^1$ regularizer with $q=1, 1.5, 2$ is considered.
    }
    }
    \label{fig:mujoco_score}
    \end{figure}
    \vspace{-25pt}
\end{center}

\begin{center}
    \begin{figure}[h]
        \centering
        \includegraphics[width=\textwidth]{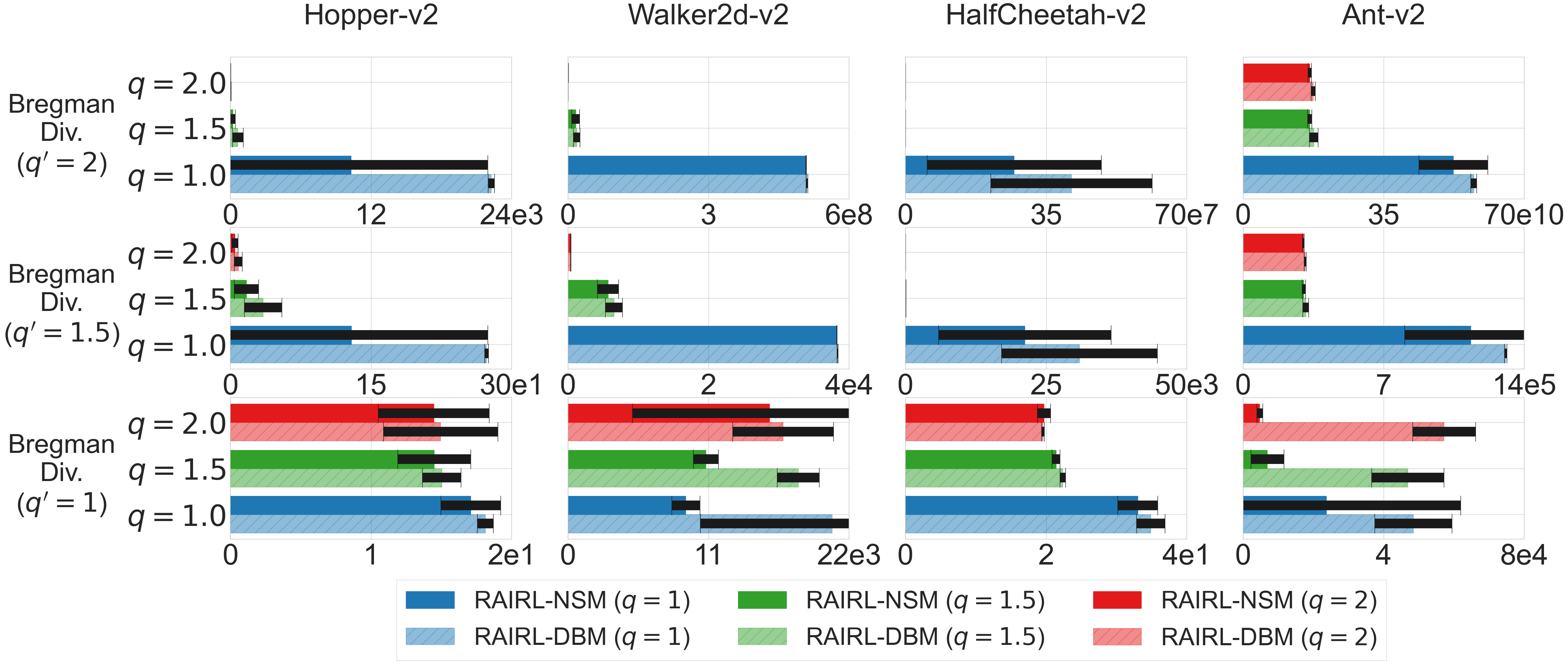}
    \caption{
    \Blue{
    Bregman divergences with Tsallis entropy $\mathcal{T}_{q'}^1$ with $q'=1, 1.5, 2$ during RAIRL's training in MuJoCo environments. We consider RAIRL with Tsallis entropy regularizer $\mathcal{T}_{q}^1$ with $q=1, 1.5, 2$.
    }
    }
    \label{fig:mujoco_bregman}
    \end{figure}
    \vspace{-30pt}
\end{center}

        

\section{Discussion and Future Works}

We consider the problem of IRL in regularized MDPs~\citep{geist2019theory}, assuming a  class of \Blue{strongly} convex policy regularizers.
We theoretically derive its solution (a set of reward functions) and show that learning with these rewards is equivalent to a specific instance of imitation learning---i.e., one that minimizes the Bregman divergence associated with
policy regularizers.
We propose RAIRL---a practical sampled-based IRL algorithm in regularized MDPs---and evaluate its applicability on policy imitation (for discrete and continuous controls) and reward acquisition (for discrete control).

Finally, recent advances in imitation learning and IRL are built from the perspective of regarding imitation learning as statistical divergence minimization problems~\citep{ke2019imitation,ghasemipour2019divergence}. Although Bregman divergence is known to cover various divergences, it does not include some divergence families such as $f$-divergence~\citep{csiszar63,amari2009alpha}. Therefore, we believe that considering RL with policy regularization different from \citet{geist2019theory} and its inverse problem is a possible way of finding the links between imitation learning and various statistical distances.

\bibliography{iclr2021_conference}
\bibliographystyle{iclr2021_conference}

\newpage
\appendix

\Blue{
\section{Bellman Operators, Value Functions in regularized MDPs}
\label{regularizedrl}

Let a policy regularizer $\Omega:\P^\A\rightarrow\R$ be strongly convex, and define the convex conjugate of $\Omega$ is $\Omega^*:\R^\A\rightarrow\R$ as
\begin{align}
    \Omega^*(Q(s, \cdot))
    &=
    \max_{\pi(\cdot|s)\in\P^\A}
    \inner{\pi(\cdot|s)}{Q(s, \cdot)}{\A}
    - \Omega(\pi(\cdot|s)), Q\in\R^\SA, s\in\S.
    \label{eq:geist1}
\end{align}
Then, Bellman operators, equations and value functions in regularied MDPs are defined as follows.
\begin{definition}[Regularized Bellman Operators]
\label{definition1}
For $V\in\R^\S$, let us define $Q(s, a)=r(s, a)+\gamma\E_{s'\sim P(\cdot|s, a)}[V(s')]$. 
The regularized Bellman evaluation operator is defined as
\begin{align*}
    [T^{\pi}V](s)
    :=
    \inner{\pi(\cdot|s)}{Q(s, \cdot)}{\A}-\Omega(\pi(\cdot|s)),
    s\in\S,
\end{align*}
and $T^{\pi}V=V$ is called the regularized Bellman equation. Also, the regularized Bellman optimality operator is defined as
\begin{align*}
    [T^{*}V](s)
    :=
    \max_{\pi(\cdot|s)\in\P^\A}
    [T^{\pi}V](s)
    =
    \Omega^*(Q(s, \cdot)), 
    s\in\S,
\end{align*}
and $T^{*}V=V$ is called the regularized Bellman optimality equation.
\end{definition}

\begin{definition}[Regularized value functions]
The unique fixed point $V^\pi$ of the operator $T^\pi$ is called the state value function, and $Q^\pi(s, a)=r(s, a)+\gamma\E_{s'\sim P(\cdot|s, a)}[V^\pi(s')]$ is called the state-action value function. Also, the unique fixed point $V^*$ of the operator $T^*$ is called the optimal state value function, and $Q^*(s, a)=r(s, a)+\gamma\E_{s'\sim P(\cdot|s, a)}[V^*(s')]$ is called the optimal state-action value function. 
\end{definition}
It should be noted that \textbf{Proposition 1} in \cite{geist2019theory} tells us $\nabla\Omega^*(Q(s, \cdot))$ is a policy that \emph{uniquely} maximizes \eqref{eq:geist1}.
For example, when $\Omega(\pi(\cdot|s))=\sum_{a\sim \pi(\cdot|s)}\log \pi(a|s)$ (negative Shannon entropy), 
$\nabla\Omega^*(Q(s, \cdot))$ is a softmax policy, i.e.,
$
    \nabla\Omega^*(Q(s, \cdot))
    =
    \frac{\exp(Q(s, \cdot))}{\sum_{a'\in\A}\exp(Q(s, a'))}
$.
Due to this property, the optimal policy $\pi^*$ of a regularized MDP is uniquely found for the optimal state-action value function $Q^*$ which is also uniquely defined as the fixed point of $T^*$.
}
\section{Proof of \lemmaref{lemma1}}
\label{proofoflemma1}

Let us define $\pi^s=\pi(\cdot|s)$. 
For $r(s, a)=t(s, a;\pie)$, the RL objective \eqref{RL_OBJECTIVE} satisfies
\begin{align}
    &\E_\pi
    \left[
        \sum_{i=0}^\infty
        \gamma^i 
        \left\{
            r(s_i, a_i) - \Omega(\pi^{s_{i}})
        \right\}
    \right]
    \overset{(\textrm{i})}{=}
    \E_\pi
    \left[
        \sum_{i=0}^\infty
        \gamma^i 
        \left\{
            t(s, a;\pie) - \Omega(\pi^{s_{i}})
        \right\}
    \right]
    \nonumber
    \\
    &\overset{(\textrm{ii})}{=}
    \E_\pi
    \left[
        \sum_{i=0}^\infty
        \gamma^i 
        \left\{
            \Omega'(s_i, a_i;\pie)
            -
            \E_{a\sim\pie^{s_i}}\Omega'(s_i, a;\pie)+\Omega(\pie^{s_{i}})
            -
            \Omega(\pi^{s_{i}})
        \right\}
    \right]
    \nonumber
    \\
    &\overset{(\textrm{iii})}{=}
    \E_\pi
    \left[
        \sum_{i=0}^\infty
        \gamma^i 
        \left\{
            \E_{a\sim\pi^{s_{i}}}\Omega'(s_i, a;\pie)
            -
            \E_{a\sim\pie^{s_{i}}}\Omega'(s_i, a;\pie)
            +
            \Omega(\pie^{s_{i}})
            -
            \Omega(\pi^{s_{i}})
        \right\}
    \right]
    \nonumber
    \\
    &\overset{(\textrm{iv})}{=}
    \E_\pi
    \left[
        \sum_{i=0}^\infty
        \gamma^i 
        \left\{
            \inner{\nabla\Omega(\pie^{s_{i}})}{\pi^{s_{i}}}{\A}
            -
            \inner{\nabla\Omega(\pie^{s_{i}})}{\pie^{s_{i}}}{\A}
            +
            \Omega(\pie^{s_{i}})
            -
            \Omega(\pi^{s_{i}})
        \right\}
    \right]
    \nonumber
    \\
    &=
    \E_\pi
    \left[
        \sum_{i=0}^\infty
        \gamma^i 
        \left\{
            \Omega(\pie^{s_{i}})
            -
            \Omega(\pi^{s_{i}})
            +
            \inner{\nabla\Omega(\pie^{s_{i}})}{\pi^{s_{i}}}{\A}
            -
            \inner{\nabla\Omega(\pie^{s_{i}})}{\pie^{s_{i}}}{\A}
        \right\}
    \right]
    \nonumber
    \\
    &=
    -\E_\pi
    \left[
        \sum_{i=0}^\infty
        \gamma^i 
        \left\{
            \Omega(\pi^{s_{i}})
            -
            \Omega(\pie^{s_{i}})
            -
            \inner{\nabla\Omega(\pie^{s_{i}})}{\pi^{s_{i}}}{\A}
            +
            \inner{\nabla\Omega(\pie^{s_{i}})}{\pie^{s_{i}}}{\A}
        \right\}
    \right]
    \nonumber
    \\
    &=
    -\E_\pi
    \left[
        \sum_{i=0}^\infty
        \gamma^i 
        \left\{
            \Omega(\pi^{s_{i}})
            -
            \Omega(\pie^{s_{i}})
            -
            \inner{\nabla\Omega(\pie^{s_{i}})}{\pi^{s_{i}}-\pie^{s_{i}}}{\A}
        \right\}
    \right]
    \overset{(\textrm{v})}{=}
    -\E_\pi
    \left[
        \sum_{i=0}^\infty
        \gamma^i 
        D_\Omega^{\A}(\pi^{s_i}||\pie^{s_{i}}))
    \right],
    \label{eq:proofoflemma1:1}
\end{align}
where 
\textrm{(i)} follows from the assumption $r(s, a)=t(s, a;\pie)$ in \lemmaref{lemma1}, 
\textrm{(ii)} follows from the definition of $t(s, a;\pi)$ in \eqref{TARGET_REWARD},
\textrm{(iii)} follows since taking the inner expectation first does not change the overall expectation,
\textrm{(iv)} follows 
from the definition of $\Omega'$ in \lemmaref{lemma1} 
and 
$
\sum_{a\in\A}\Blue{p}(a)[\nabla\Omega(\Blue{p})](a)=\inner{\nabla\Omega(\Blue{p})}{\Blue{p}}{\A}
$,
and
\textrm{(v)} follows from the definition of Bregman divergence, i.e.,
$D_\Omega^{\A}(\Blue{p_1}||\Blue{p_2})=\Omega(\Blue{p_1})-\Omega(\Blue{p_2})-\inner{\nabla\Omega(\Blue{p_2})}{\Blue{p_1}-\Blue{p_2}}{\A}$.
Due to the non-negativity of $D_\Omega^{\A}$, \eqref{eq:proofoflemma1:1} is less than or equal to zero which can be achieved when $\pi=\pie$.

\section{Proof of \corollaryref{corollary1}}
\label{proofofcorollary1}
Let $a\in\{1, ..., |\A|\}$ and $\pi_a=\pi(a)$ for simplicity.
For 
\begin{align*}
\Omega(\pi)
=
-\lambda\E_{a\sim\pi}\phi(\pi_a)
=
-\lambda\sum_{a\in\A}\pi_a\phi(\pi_a)
=
-\lambda\sum_{a\in\A}f_\phi(\pi_a)
\end{align*}
with $f_\phi(x)=x\phi(x)$,
we have
\begin{align*}
    \nabla\Omega(\pi)
    =
    -\lambda
    \nabla_{\pi_1,...,\pi_{|\A|}}{\sum_{a\in\A}f_\phi(\pi_a)}
    =
    -\lambda[f_\phi'(\pi_1), ..., f_\phi'(\pi_{|\A|})]^T
\end{align*}
for $f_\phi'(x)=\frac{\partial}{\partial x}(x\phi(x))$. Therefore, for $\pi^s=\pi(\cdot|s)$ we have
\begin{align*}
    t(s, a;\pi)
    &=
    [\nabla\Omega(\pi^s)](a)
    -
    \inner{\nabla\Omega(\pi^s)}{\pi^s}{\A}
    +
    \Omega(\pi^s)
    \\
    &=
    -\lambda f_\phi'(\pi^s_a)
    -\left(\sum_{a'\in\A}\pi^s_{a'}(-\lambda f_\phi'(\pi^s_{a'}))\right)
    +\left(-\lambda\sum_{{a'}\in\A}\pi^s_{a'}\phi(\pi^s_{a'})\right)
    \\
    &=
    -\lambda 
    \left\{
        f_\phi'(\pi^s_a)
        -
        \left(
            \sum_{a'\in\A}\pi^s_{a'}f_\phi'(\pi^s_{a'})
        \right)
        +
        \left(
            \sum_{a'\in\A}\pi^s_{a'}\phi(\pi^s_{a'})
        \right)
    \right\}
    \\
    &=
    -\lambda 
    \left\{
        f_\phi'(\pi^s_a)
        -
        \sum_{a'\in\A}\pi^s_{a'}
        \left(
            f_\phi'(\pi^s_{a'})
            -
            \phi(\pi^s_{a'})
        \right)
    \right\}
    \\
    &=
    -\lambda 
    \left\{
        f_\phi'(\pi^s_a)
        -
        \E_{a'\sim\pi^s}
        \left[
            f_\phi'(\pi^s_{a'})
            -
            \phi(\pi^s_{a'})
        \right]
    \right\}.
\end{align*}

\section{Proof of Optimal Rewards on Continuous Controls}\label{proofforcontinuouscontrol}

Note that for two continuous distributions $\mathbb{P}_1$ and $\mathbb{P}_2$ having probability density functions $p_1(x)$ and $p_2(x)$, respectively, the Bregman divergence can be defined as~\citep{guo2017ambiguity,jones1990general}
\begin{align}
D_\omega^{\X}(\mathbb{P}_1||\mathbb{P}_2)
:=
\int_\X
\left\{
    \omega(p_1(x))
    -
    \omega(p_2(x))
    -
    \omega'(p_2(x))
    (p_1(x)-p_2(x))
\right\}
dx,
\label{EQD}
\end{align}
where $\omega'(x):=\frac{\partial}{\partial x} \omega(x)$ and the divergence is measure point-wisely on $x\in\X$.
Let us assume 
\begin{align}
    \Omega(\pi)
    =
    \int_\A\omega(\pi(a))da
    \label{EQOMEGA}
\end{align}
for a probability density function $\pi$ on $\A$ and define 
\begin{align}
    t(s, a;\pi)
    :=
    \omega'(\pi^s(a))
    -
    \int_{\A}
    \left[
        \pi^s(a')\omega'(\pi^s(a'))
        -
        \omega(\pi^s(a'))
    \right]
    da'.
    \label{EQT}
\end{align}
for $\pi^s=\pi(\cdot|s)$.
For $r(s, a)=t(s, a;\pie)$, the RL objective in~\eqref{RL_OBJECTIVE} satisfies
\begin{align}
    &\E_\pi
    \left[
        \sum_{i=0}^\infty
        \gamma^i 
        \left\{
            r(s_i, a_i) - \Omega(\pi^{s_{i}})
        \right\}
    \right]
    =
    \E_\pi
    \left[
        \sum_{i=0}^\infty
        \gamma^i 
        \left\{
            t(s_i, a_i;\pie) 
            -
            \Omega(\pi^{s_{i}})
        \right\}
    \right]
    \nonumber
    \\
    &=
    \E_\pi
    \left[
        \sum_{i=0}^\infty
        \gamma^i 
        \left\{
            \omega'(\pie^{s_i}(a_i))
            -
            \int_\A
            \left[
                \pie^{s_i}(a)\omega'(\pie^{s_i}(a))
                -
                \omega(\pie^{s_i}(a))
            \right]
            da
            -
            \int_\A
            \omega(\pi^{s_{i}}(a))
            da
        \right\}
    \right]
    \nonumber
    \\
    &=
    \E_\pi
    \left[
        \sum_{i=0}^\infty
        \gamma^i 
        \left\{
            \int_\A
            \pi^{s_i}(a)
            \omega'(\pie^{s_i}(a))
            da
            -
            \int_\A
            \left[
                \pie^{s_i}(a)\omega'(\pie^{s_i}(a))
                -
                \omega(\pie^{s_i}(a))
                +
                \omega(\pi^{s_{i}}(a))
            \right]
            da
        \right\}
    \right]
    \nonumber
    \\
    &=
    \E_\pi
    \left[
        \sum_{i=0}^\infty
        \gamma^i 
        \int_\A
        \left\{
            \pi^{s_i}(a)
            \omega'(\pie^{s_i}(a))
            -
            \left[
                \pie^{s_i}(a)\omega'(\pie^{s_i}(a))
                -
                \omega(\pie^{s_i}(a))
                +
                \omega(\pi^{s_{i}}(a))
            \right]
        \right\}
        da
    \right]
    \nonumber
    \\
    &=
    \E_\pi
    \left[
        \sum_{i=0}^\infty
        \gamma^i 
        \int_\A
        \left\{
            \omega(\pie^{s_i}(a))
            -
            \omega(\pi^{s_i}(a))
            +
            \pi^{s_i}(a)
            \omega'(\pie^{s_i}(a))
            -
            \pie^{s_i}(a)\omega'(\pie^{s_i}(a))
        \right\}
        da
    \right]
    \nonumber
    \\
    &=
    -
    \E_\pi
    \left[
        \sum_{i=0}^\infty
        \gamma^i 
        \int_\A
        \left\{
            \omega(\pi^{s_i}(a))
            -
            \omega(\pie^{s_i}(a))
            -
            \pi^{s_i}(a)
            \omega'(\pie^{s_i}(a))
            +
            \pie^{s_i}(a)\omega'(\pie^{s_i}(a))
        \right\}
        da
    \right]
    \nonumber
    \\
    &=
    -
    \E_\pi
    \left[
        \sum_{i=0}^\infty
        \gamma^i 
        \int_\A
        \left\{
            \omega(\pi^{s_i}(a))
            -
            \omega(\pie^{s_i}(a))
            -
            \omega'(\pie^{s_i}(a))
            \left(
                \pi^{s_i}(a)
                -
                \pie^{s_i}(a)
            \right)
        \right\}
        da
    \right]
    \nonumber
    \\
    &=
    -\E_\pi
    \left[
        \sum_{i=0}^\infty
        \gamma^i 
        D_\omega^{\A}(\pi^{s_i}||\pie^{s_{i}}))
    \right],
    \label{EQLAST}
\end{align}
where 
\textrm{(i)} follows from $r(s, a)=t(s, a;\pie)$,
\textrm{(ii)} follows from \eqref{EQOMEGA} and \eqref{EQT},
and
\textrm{(iii)} follows from the definition of Bregman divergence in~\eqref{EQD}.
Due to the non-negativity of $D_\omega$, \eqref{EQLAST} is less than or equal to zero which can be achieved when $\pi=\pie$.
Also, $\pi=\pie$ is a unique solution since \eqref{RL_OBJECTIVE} has a unique solution for arbitrary reward functions. 

\Blue{
\section{Proof of \lemmaref{lemma2}}
\label{proofoflemma2}
Since \lemmaref{lemma2} was mentioned but not proved in \cite{geist2019theory}, we remain the rigorous proof for \lemmaref{lemma2} in this subsection. Note that we follow the proof idea in \cite{ng1999policy}.

First, let us assume an MDP $\M^r$ with a reward $r$ and corresponding optimal policy $\pi^*$. From \textbf{Definition~\ref{definition1}}, the optimal state-value function $V^{*,r}$ and its corresponding state-action value function $Q^{*,r}(s, a):=r(s, a)+\gamma\E_{s'\sim P(\cdot|s, a)}[V^{*,r}(s')]$ should satisfies the regularized Bellman optimality equation
\begin{align}
    V^{*,r}(s)
    &=T^{*,r}V^{*,r}(s)
    =
    \Omega^*(Q^{*,r}(s, a))
    \nonumber
    \\
    &=
    \max_{\pi(\cdot|s)\in\P^\A}
    \inner{\pi(\cdot|s)}{Q^{*,r}(s, a)}{\A}-\Omega(\pi(\cdot|s))
    \nonumber
    \\
    &=
    \max_{\pi(\cdot|s)\in\P^\A}
    \inner{\pi(\cdot|s)}{r(s, a)+\gamma\E_{s'\sim P(\cdot|s, a)}[V^{*,r}(s')]}{\A}-\Omega(\pi(\cdot|s)),
    \label{eq:optimalityforr}
\end{align} 
where we explicitize the dependencies on $r$. Also, the optimal policy $\pi^*$ is a unique maximizer for the above maximization.

Now, let us consider the shaped rewards $r(s, a)+\Phi(s')-\Phi(s)$ and $r(s, a)+\E_{s'\sim P(\cdot|s, a)}\Phi(s')-\Phi(s)$. Please note that for both rewards, the expectation over $s'$ for given $s, a$ is equal to
\begin{align*}
    \tilde{r}(s, a)=r(s, a)+\E_{s'\sim P(\cdot|s, a)}\Phi(s')-\Phi(s),
\end{align*}
and thus, it is sufficient to only consider the optimality for $\tilde{r}$. By subtracting $\Phi(s)$ from the regularized optimality Bellman equation for $r$ in \eqref{eq:optimalityforr}, we have
\begin{align}
    &V^{*,r}(s)-\Phi(s)
    \nonumber
    \\
    &=
    \max_{\pi(\cdot|s)\in\P^\A}
    \inner{\pi(\cdot|s)}{
    r(s, a)
    +\gamma\E_{s'\sim P(\cdot|s, a)}\Phi(s')
    -
    \Phi(s)
    +\gamma\E_{s'\sim P(\cdot|s, a)}[V^{*,r}(s')-\Phi(s')]}{\A}
    \nonumber
    \\
    &
    ~~~~~~~~~~~~~~~~~~~~-\Omega(\pi(\cdot|s))
    \nonumber
    \\
    &=
    \max_{\pi(\cdot|s)\in\P^\A}
    \inner{\pi(\cdot|s)}{
    \tilde{r}(s, a)
    +\gamma\E_{s'\sim P(\cdot|s, a)}[V^{*,r}(s')-\Phi(s')]}{\A} -\Omega(\pi(\cdot|s))
    \nonumber
    \\
    &=
    [T^{*,\tilde{r}}(V^{*,r}-\Phi)](s).
    \nonumber
\end{align}
That is, $V^{*,r}-\Phi$ is the fixed point of the regularized Bellman optimality operator $T^{*,\tilde{r}}$ associated with the shaped reward $\tilde{r}$. Also, a maximizer for the above maximization is $\pi^*$ since subtracting $\Phi(s)$ from \eqref{eq:optimalityforr} does not change the unique maximizer $\pi^*$. 
}

\Blue
{
\section{Comparison between our solution and existing solution in \cite{geist2019theory}}
\subsection{Problems of Existing Solutions in \cite{geist2019theory}}
\label{intractable_form_solution}

In \textbf{Proposition 5} of \cite{geist2019theory}, a solution of regularized IRL is given, and we rewrite the relevant theorem in this subsection. 
Let us consider a regularized IRL problem, where $\pie\in\P_\S^\A$ is an expert policy. 
Assuming that both the model (dynamics, discount factor and regularizer) and the expert policy are known, 
\cite{geist2019theory} proposed a solution of regularized IRL as follows:
\begin{lemma}[A solution of regularized IRL from \cite{geist2019theory}]
\label{lemma4}
Let $\Qe\in\R^\SA$ be a function such that
$
    \pi_E(\cdot|s)
    =
    \nabla\Omega^*(\Qe(s, \cdot)), s\in\S
$.
Also, define
\begin{align*}
    \re(s, a)
    &:=
    \Qe(s, a)-\gamma\E_{s'\sim P(\cdot|s, a)}[\Omega^*(\Qe(s', \cdot))]\\
    &=
    \Qe(s, a)-\gamma\E_{s'\sim P(\cdot|s, a)}[\inner{\pie(\cdot|s')}{\Qe(s', \cdot)}{\A}-\Omega(\pie(\cdot|s'))].
\end{align*}
Then in the $\Omega$-regularized MDP with the reward $\re$, $\pie$ is an optimal policy.
\end{lemma}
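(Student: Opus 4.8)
The plan is to verify directly that the reward $\re$ makes the regularized Bellman optimality equation hold at the candidate value function, and then invoke uniqueness of the fixed point to conclude that $\pie$ is optimal. Concretely, first I would set $V_E(s) := \Omega^*(\Qe(s,\cdot))$ and observe that, by construction of $\re$, we have $\Qe(s,a) = \re(s,a) + \gamma\,\E_{s'\sim P(\cdot|s,a)}[V_E(s')]$. That is, $\Qe$ is exactly the state-action value function associated with the state value function $V_E$ in the $\Omega$-regularized MDP $\M^{\re}$ (in the sense of \textbf{Definition~\ref{definition1}}).

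Next I would compute $[T^{*,\re}V_E](s)$. By \textbf{Definition~\ref{definition1}} and \eqref{eq:geist1}, this equals $\Omega^*\big(\re(s,\cdot) + \gamma\,\E_{s'\sim P(\cdot|s,a)}[V_E(s')]\big) = \Omega^*(\Qe(s,\cdot)) = V_E(s)$, using the identity from the previous step. Hence $V_E$ is a fixed point of $T^{*,\re}$. By the contraction property of the regularized Bellman optimality operator (the fixed point is unique, as noted in Appendix~\ref{regularizedrl} and in \citet{geist2019theory}), $V_E$ is \emph{the} optimal state value function $V^{*,\re}$ and $\Qe$ is the optimal state-action value function $Q^{*,\re}$.

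Finally, I would use \textbf{Proposition 1} of \citet{geist2019theory} (quoted in Appendix~\ref{regularizedrl}): the unique maximizer of \eqref{eq:geist1} at $Q^{*,\re}(s,\cdot)$ is $\nabla\Omega^*(Q^{*,\re}(s,\cdot)) = \nabla\Omega^*(\Qe(s,\cdot))$, which by the defining hypothesis on $\Qe$ equals $\pie(\cdot|s)$. Therefore $\pie$ attains the maximum in the regularized Bellman optimality operator at every state, i.e. $\pie$ is the (unique) optimal policy of the $\Omega$-regularized MDP with reward $\re$. The two displayed forms of $\re$ in the statement coincide simply by unfolding $\Omega^*(\Qe(s',\cdot)) = \inner{\pie(\cdot|s')}{\Qe(s',\cdot)}{\A} - \Omega(\pie(\cdot|s'))$, since $\pie(\cdot|s')$ is the maximizer — this is a one-line substitution.

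The only subtlety, rather than a genuine obstacle, is making sure the existence of $\Qe$ with $\pie(\cdot|s) = \nabla\Omega^*(\Qe(s,\cdot))$ is legitimate to assume: this is guaranteed because $\nabla\Omega^*$ maps $\R^\A$ onto the relative interior of $\P^\A$ (strong convexity of $\Omega$), so such $\Qe$ exists whenever $\pie(\cdot|s)$ lies in that interior; one may take $\Qe(s,\cdot) = \nabla\Omega(\pie(\cdot|s))$ as a canonical choice. Everything else is bookkeeping with the definitions of $\Omega^*$, $T^{*}$, and the fixed-point uniqueness already established earlier in the excerpt.
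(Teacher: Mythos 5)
Your proposal is correct and follows essentially the same route as the paper's proof: define the candidate value function $\Ve(s)=\Omega^*(\Qe(s,\cdot))=\inner{\pie(\cdot|s)}{\Qe(s,\cdot)}{\A}-\Omega(\pie(\cdot|s))$, observe that $\Qe$ is its associated state-action value under $\re$, verify $T^*\Ve=\Ve$, and conclude via uniqueness of the fixed point and $\pie(\cdot|s)=\nabla\Omega^*(\Qe(s,\cdot))$ being the unique maximizer. Your closing remark on the existence of such a $\Qe$ (e.g.\ $\Qe(s,\cdot)=\nabla\Omega(\pie(\cdot|s))$) is a harmless addition that the paper defers to its subsequent subsection.
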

\begin{proof}
Although the brief idea of the proof is given in \cite{geist2019theory}, we rewrite the proof in a more rigorous way as follows.
Let us define $\Ve(s)=\inner{\pie(\cdot|s)}{\Qe(s, \cdot)}{\A}-\Omega(\pie(\cdot|s))$. 
Then, $\re(s, a)=\Qe(s, a) - \gamma\E_{s'\sim P(\cdot|s, a)}[\Ve(s')]$, and thus, $\Qe(s, a)=\re(s, a)+\gamma\E_{s'\sim P(\cdot|s, a)}[\Ve(s')]$. By using this and regularized Bellman optimality operator, we have
\begin{align*}
    [T^*\Ve](s)
    &\overset{(\textrm{i})}{=}
    \Omega^*(\Qe(s, \cdot))
    \overset{(\textrm{ii})}{=}
    \max_{\pi(\cdot|s)\in\P^\A}
    \inner{\pi(\cdot|s)}{\Qe(s, \cdot)}{\A}-\Omega(\pi(\cdot|s))\\
    &\overset{(\textrm{iii})}{=}
    \inner{\pie(\cdot|s)}{\Qe(s, \cdot)}{\A}-\Omega(\pie(\cdot|s))
    =
    \Ve(s),
\end{align*}
where (i) and (ii) follow from \textbf{Definition~\ref{definition1}}, and (iii) follows since $\pie$ is a unique maximizer. 
Thus, $\Ve$ is the fixed point of $T^*$, and 
$
    \pi_E(\cdot|s)
    =
    \nabla\Omega^*(\Qe(s, \cdot)), s\in\S,
$
becomes an optimal policy. 
\end{proof}
For example, when negative Shannon entropy is used as a regularizer, we can get $\re(s, a)=\log\pie(a|s)$ by setting $\Qe(s, a)=\log\pie(a|s)$.
However, a solution proposed in \lemmaref{lemma4} has two crucial issues:
\newline
\textbf{Issue 1.} It requires the knowledge on the model dynamics, which is generally intractable.
\newline
\textbf{Issue 2.} We need to figure out $\Qe$ that satisfies $\pie(\cdot|s)=\nabla\Omega^*(\Qe(s, \cdot))$, which comes from the relationship between optimal value function and optimal policy~\citep{geist2019theory}.

In the following subsection, we show how our solution in \lemmaref{lemma1} is related to the solution from \cite{geist2019theory} in \lemmaref{lemma4}.

\subsection{From the solution of \cite{geist2019theory} to the tractable solution}
Let us consider the expert policy $\pie$ and quantities $\Qe$ and $\re$ satisfying the conditions in \lemmaref{lemma4}. From \lemmaref{lemma2}, a regularized MDP with the shaped reward
\begin{align*}
    \tildere(s, a):=\re(s, a)+\gamma\E_{s'\sim P(\cdot|s, a)}\Phi(s') - \Phi(s)
\end{align*}
for $\Phi\in\R^\S$ has its optimal policy as $\pie$.
Since $\Phi$ can be arbitrarily chosen, let us assume $\Phi(s)=\Omega^*(\Qe(s, \cdot))$. Then, we have
\begin{align}
    \tildere(s, a)
    &=
    \re(s, a)+\gamma\E_{s'\sim P(\cdot|s, a)}\Phi(s') - \Phi(s)
    \nonumber
    \\
    &=
    \left\{
        \Qe(s, a)-\gamma\E_{s'\sim P(\cdot|s, a)}[\Omega^*(\Qe(s', \cdot))]
    \right\}
    +
        \E_{s'\sim P(\cdot|s, a)}\Omega^*(\Qe(s', \cdot)) - \Omega^*(\Qe(s, \cdot))
    \nonumber
    \\
    &=
    \Qe(s, a)
    -
    \Omega^*(\Qe(s, \cdot)).
    \label{eq:newsolution1}
\end{align}
Note that the reward in $\eqref{eq:newsolution1}$ does not require the knowledge on the model dynamics, which addresses \textbf{Issue 1} in Appendix~\ref{intractable_form_solution}.
Also, by using $\Ve(s)=\Omega^*(\Qe(s, \cdot))$ in the proof of \lemmaref{lemma4}, the reward in $\eqref{eq:newsolution1}$ can be written as 
\begin{align*}
    \tildere(s, a)
    =
    \Qe(s, a) - \Ve(s),
\end{align*}
which means $\tildere(s, a)$ is \emph{an advantage function} for the optimal policy $\pie$ in the $\Omega$-regularized MDP. 

However, we still have \textbf{Issue 2} in Appendix~\ref{intractable_form_solution} since $\Omega^*$ in $\eqref{eq:newsolution1}$ is generally intractable~\citep{geist2019theory}, which prevents us to find out appropriate $\Qe(s, a)$. Interestingly, we show that 
for all $s\in S$ and $\Qe(s, \cdot)=\nabla\Omega(\pie(\cdot|s))$,
\begin{align*}
    \nabla\Omega^*(\Qe(s, \cdot))
    &=
    \argmax_{\pi(\cdot|s)\in\P^\A}
    \inner{\pi(\cdot|s)}{\Qe(s, \cdot)}{\A}-\Omega(\pi(\cdot|s))\\
    &=
    \argmax_{\pi(\cdot|s)\in\P^\A}
    \inner{\pi(\cdot|s)}{\nabla\Omega(\pie(\cdot|s))}{\A}-\Omega(\pi(\cdot|s))\\
    &=
    \argmin_{\pi(\cdot|s)\in\P^\A}
    \Omega(\pi(\cdot|s))
    -
    \inner{\nabla\Omega(\pie(\cdot|s))}{\pi(\cdot|s)}{\A}\\
    &=
    \argmin_{\pi(\cdot|s)\in\P^\A}
    \Omega(\pi(\cdot|s))-\Omega(\pie(\cdot|s))
    -
    \inner{\nabla\Omega(\pie(\cdot|s))}{\pi(\cdot|s)-\pie(\cdot|s)}{\A}\\
    &=
    \argmin_{\pi(\cdot|s)\in\P^\A}
    D_\Omega^\A(\pi(\cdot|s)||\pie(\cdot|s))=\pie(\cdot|s),
\end{align*}
where the last equality holds since the Bregman divergence $D_\Omega^\A(\pi(\cdot|s)||\pie(\cdot|s))$ is greater than or equal to zero and its lowest value is achieved when $\pi(\cdot|s)=\pie(\cdot|s)$.
This means that when $\Qe(s, \cdot)=\nabla\Omega(\pie(\cdot|s))$ is used, the condition $\pie(\cdot|s)=\nabla\Omega^*(\Qe(s, \cdot))$ in \lemmaref{lemma4} is satisfied \emph{without} knowing the tractable form of $\Omega^*$ or $\nabla\Omega^*$, and thus, \textbf{Issue 2} in Appendix~\ref{intractable_form_solution} is addressed---while we require the knowledge on the gradient $\nabla\Omega$ of the policy regularizer $\Omega$, which is practically more tractable.  
Finally, by substituting $\Qe(s, \cdot)=\Omega'(s, \cdot;\pie)$ for $\Omega'(s, \cdot;\pi):=\nabla\Omega(\pi(\cdot|s)), s\in\S,$ to \eqref{eq:newsolution1}, we have
\begin{align*}
    \tildere(s, a)
    &=
    \Omega'(s, a;\pie)
    -
    \Omega^*(\Omega'(s, \cdot;\pie))\\
    &=
    \Omega'(s, a;\pie)
    -
    \left\{
        \inner{\pie(\cdot|s)}{\Omega'(s, \cdot;\pie)}{\A}-\Omega(\pie(\cdot|s))
    \right\}
    \\
    &=
    \Omega'(s, a;\pie)
    -
    \E_{a'\sim\pie(\cdot|s)}[\Omega'(s, a';\pie)]
    +
    \Omega(\pie(\cdot|s))
    \\
    &=
    t(s, a;\pie),
\end{align*}
where $t(s, a;\pie)$ is our proposed solution in \lemmaref{lemma1}.
}

\section{Proof of \lemmaref{lemma3}}
\label{proofoflemma3}

\paragraph{RL objective in Regularized MDPs w.r.t. normalized visitation distributions.}
For a reward function $r\in \R^{\SA}$ and a strongly convex function $\Omega:\P^{\A}\rightarrow\R$, 
the RL objetive $J_\Omega(r, \pi)$ in~\eqref{RL_OBJECTIVE} is equivalent to 
\begin{align}
    \argmax_{\pi}
    \bar{J}_{\bOmega}(r, d_\pi)
    :=
    \inner{r}{d_\pi}{\SA}-\bOmega(d_\pi)
    ,
    \label{EQRLWITHD}
\end{align}
where for a set $\D$ of \emph{normalized} visitation distributions~\citep{syed2008apprenticeship}
\begin{align*}
    \D
    :=
    \left\{
        d\in\P^\SA:
        \sum_{a'}d(s', a')
        =
        (1-\gamma)P_0(s')
        +
        \gamma
        \sum_{s, a}
        P(s'|s, a)d(s, a),
        \forall s'\in\S
    \right\},
\end{align*}
we define
$
    \bOmega({d})
    :=\E_{(s, a)\sim d}[\Omega(\bar{\pi}_d(\cdot|s))]
$
and
$
    \bar{\pi}_d(\cdot|s)
    := \frac{{d}(s, \cdot)}{\sum_{a'}{d}(s,a')}
    \in\P_{\S}^{\A}
$
for $d\in\D$ and
use $\bar{\pi}_{d_\pi}(\cdot|s)=\pi(\cdot|s)$ for all $s\in\S$. For $\bOmega:\D\rightarrow\R$, its convex conjugate $\bOmega^*$ is
\begin{align}
    \bOmega^*(r)
    :&=
    \max_{d\in\D}
    \bar{J}_{\bOmega}(r, d)
    \nonumber
    \\
    &=
    \max_{d\in\D}
    \inner{r}{d}{\SA}
    -
    \bOmega(d)
    \nonumber
    \\
    &\overset{(\mathrm{i})}{=}
    \max_{\pi\in\P_\S^\A}
    \inner{r}{d_\pi}{\SA}
    -
    \bOmega(d_\pi)
    \nonumber
    \\
    &=
    \max_{\pi\in\P_\S^\A}
    \sum_{s, a}d_\pi(s, a)\left[r(s, a)-\Omega(\pi(a|s))\right]
    \nonumber
    \\
    &=
    (1-\gamma)\cdot
    \max_{\pi\in\P_\S^\A}
    J_\Omega(r, \pi),
    \label{EQLAST}
\end{align}
where $(\mathrm{i})$ follows from using the one-to-one correspondence between policies and visitation distributions~\citep{syed2008apprenticeship,ho2016generative}.
Note that \eqref{EQLAST} is equal to the optimal discounted average return in regularized MDPs.

\paragraph{IRL objective in Regularized MDPs w.r.t. normalized visitation distributions.}

By using the RL objective in \eqref{EQRLWITHD}, we can rewrite the IRL objective in \eqref{EQRIRL} w.r.t. the normalized visitation distributions as the maximization of the following objective over $r\in\R^\SA$:
\begin{align}
    &(1-\gamma)
    \cdot
    \left\{
        J_{\Omega}(r, \pie)
        -
        \max_{\pi\in\P_\S^\A} 
        J_{\Omega}(r, \pi)
    \right\}
    \nonumber
    \\
    &=
    \bar{J}_{\bOmega}(r, d_{\pie})
    -
    \max_{d\in\D} 
    \bar{J}_{\bOmega}(r, d)
    \nonumber
    \\
    &=
    \min_{d\in\D} 
    \left\{
        \bar{J}_{\bOmega}(r, d_{\pie})
        -
        \bar{J}_{\bOmega}(r, d)
    \right\}
    \nonumber
    \\
    &=
    \min_{d\in\D} 
    \left\{
        \left(
        \inner{r}{d_{\pie}}{\SA}
        -\bOmega(d_{\pie})
        \right)
        -
        \left(
        \inner{r}{d}{\SA}
        -\bOmega(d)
        \right)
    \right\}
    \nonumber
    \\
    &=
    \min_{d\in\D} 
    \left\{
        \bOmega(d)
        -
        \bOmega(d_{\pie})
        -
        \inner{r}{d-d_{\pie}}{\SA}
    \right\}.
    \label{EQIRLNEWOBJ}
\end{align}
Note that if $\nabla\bOmega(d)$ is well-defined and $r=\nabla\bOmega(d_{\pie})$
for any strictly convex $\bOmega$, 
\eqref{EQIRLNEWOBJ} is equal to
\begin{align*}
    &\min_{d\in\D} 
    \left\{
        \bOmega(d)
        -
        \bOmega(d_{\pie})
        -
        \inner{\nabla\bOmega(d_{\pie})}{d-d_{\pie}}{\SA}
    \right\}
    =
    \min_{d\in\D} 
    D_{\bOmega}^\SA(d||d_{\pie}),
\end{align*}
where the equality comes from the definition of Bregman divergence.

\paragraph{Proof of $t(s, a;\pi_d)=\nabla[\bOmega(d)](s, a)$.}

For simpler notation, we use matrix-vector notation for the proof when discrete state and action spaces $\S=\{1, ..., |\S|\}$ and $\A=\{1, ..., |\A|\}$ are considered. 
For a normalized visitation distribution $d\in\D$, let us define
\begin{align*}
    d^s_a
    &
    :=
    d(s, a), s\in\S, a\in\A,
    \\
    \pd^s
    &
    :=
    [d^s_1, ..., d^s_{|\A|}]^T\in\R^\A,
    s\in\S,
    \\
    \pD
    &
    :=
    [\pd^1, ..., \pd^{|\S|}]^T
    =
    \begin{bmatrix}
    d^1_1           &\cdots     &d^1_{|\A|}     \\
    \vdots          &\ddots     &\vdots         \\
    d^{|\S|}_1        &\cdots     &d^{|\S|}_{|\A|}  \\
    \end{bmatrix}
    \in\R^\SA,
    \\
    \ppi(\px)
    &
    :=
    \frac{\px}{\one{\A}^T\px}
    =\frac{1}{\sum_{a\in\A}x_a}
    \left[
        x_1, ..., x_{|\A|}
    \right]^T
    \in\R^\A,
    \px:=\left[
        x_1, ..., x_{|\A|}
    \right]^T\in\R^\A,
\end{align*}
where $\one{\A}=[1,...,1]^T\in\R^\A$ is an $|\A|$-dimensional all-one vector.
By using these notations, the original $\bOmega$ can be rewritten as
\begin{align*}
    \bOmega(\pD)
    =
    \sum_{s, a}d^s_a\Omega(\ppi(\pd^s))
    =
    \sum_{s\in\S}\one{\A}^T\pd^s\Omega(\ppi(\pd^s)).
\end{align*}
The gradient of $\bOmega$ w.r.t. $\pD$ (using denominator-layout notation) is
\begin{align*}
    \nabla_{\pD}\bOmega(\pD)
    &=
    \left[
        \dfrac{\partial\bOmega(\pD)}{\partial\pd^1},
        ...,
        \dfrac{\partial\bOmega(\pD)}{\partial\pd^{|\S|}}
    \right]^T\in\R^\SA,
\end{align*}
where each element of $\nabla_{\pD}\bOmega(\pD)$ satisfies
\begin{align}
    \dfrac{\partial\bOmega(\pD)}{\partial\pd^s}
    &=
    \left[
        \dfrac{\partial\bOmega(\pD)}{\partial d^s_1},
        ...,
        \dfrac{\partial\bOmega(\pD)}{\partial d^s_{|\A|}}
    \right]^T
    \nonumber
    \\
    &=
    \frac{\partial}{\partial\pd^s}
    \left\{
        \sum_{s\in\S}\one{\A}^T\pd^s\Omega(\ppi(\pd^s))
    \right\}
    \nonumber
    \\
    &=
    \Omega(\ppi(\pd^s))\one{\A}
    +
    \one{\A}^T\pd^s
    \frac{\partial\Omega(\ppi(\pd^s))}{\partial\pd^s}
    \nonumber
    \\
    &=
    \Omega(\ppi(\pd^s))\one{\A}
    +
    \one{\A}^T\pd^s
    \frac{\partial\ppi(\pd^s)}{\partial\pd^s}
    \frac{\partial\Omega(\ppi(\pd^s))}{\partial\ppi(\pd^s)}.
    \label{EQIDENTITY1}
\end{align}
for
\begin{align*}
    \frac{\partial\ppi(\pd^s)}{\partial\pd^s}
    &=
    \left[
        \frac{\partial\bpi_{1}(\pd^s)}{\partial\pd^s},
        ...,
        \frac{\partial\bpi_{|\A|}(\pd^s)}{\partial\pd^s}
    \right],
    \\
    \frac{\partial\bpi_{a}(\pd^s)}{\partial\pd^s}
    &=
    \frac{\partial}{\partial\pd^s}
    \left[
    \frac{d^s_a}{\one{\A}^T\pd^s}
    \right]
    =
    \frac{\partial d^s_a}{\partial\pd^s}
    (\one{\A}^T\pd^s)^{-1}
    +
    d^s_a
    \frac{\partial (\one{\A}^T\pd^s)^{-1}}{\partial\pd^s}.
\end{align*}
Note that each element of $\frac{\partial\bpi_{a}(\pd^s)}{\partial\pd^s}$ satisfies
\begin{align*}
    \frac{\partial\bpi_{a}(\pd^s)}{\partial d^s_{a'}}
    &=
    \frac{\partial d^s_a}{\partial d^s_{a'}}
    (\one{\A}^T\pd^s)^{-1}
    +
    d^s_a
    \frac{\partial (\one{\A}^T\pd^s)^{-1}}{\partial d^s_{a'}}
    \\
    &=
    \mathbb{I}\{a=a'\}
    (\one{\A}^T\pd^s)^{-1}
    -
    d^s_a
    (\one{\A}^T\pd^s)^{-2}
    \\
    &=
    \mathbb{I}\{a=a'\}
    (\one{\A}^T\pd^s)^{-1}
    -
    \bpi_a(\pd^s)
    (\one{\A}^T\pd^s)^{-1}
    \\
    &=
    (\one{\A}^T\pd^s)^{-1}
    \left[
        \mathbb{I}\{a=a'\}
        -
        \bpi_a(\pd^s)
    \right],
\end{align*}
and thus,
\begin{align}
    \frac{\partial\ppi(\pd^s)}{\partial\pd^s}
    &=
    (\one{\A}^T\pd^s)^{-1}
    \left\{
        \pI_{\A\times\A}
        -
        \one{\A}
        [\ppi(\pd^s)]^T
    \right\}.
    \label{EQIDENTITY2}
\end{align}
By substituting \eqref{EQIDENTITY2} into \eqref{EQIDENTITY1}, we have
\begin{align}
    \dfrac{\partial\bOmega(\pD)}{\partial\pd^s}
    &=
    \Omega(\ppi(\pd^s))\one{\A}
    +
    \one{\A}^T\pd^s
    \frac{\partial\ppi(\pd^s)}{\partial\pd^s}
    \frac{\partial\Omega(\ppi(\pd^s))}{\partial\ppi(\pd^s)}
    \nonumber
    \\
    &=
    \Omega(\ppi(\pd^s))\one{\A}
    +
    \one{\A}^T\pd^s
    \left[
        (\one{\A}^T\pd^s)^{-1}
        \left\{
            \pI_{\A\times\A}
            -
            \one{\A}
            [\ppi(\pd^s)]^T
        \right\}
    \right]
    \frac{\partial\Omega(\ppi(\pd^s))}{\partial\ppi(\pd^s)}
    \nonumber
    \\
    &=
    \Omega(\ppi(\pd^s))\one{\A}
    +
    \left\{
        \pI_{\A\times\A}
        -
        \one{\A}
        [\ppi(\pd^s)]^T
    \right\}
    \frac{\partial\Omega(\ppi(\pd^s))}{\partial\ppi(\pd^s)}
    \nonumber
    \\
    &=
    \Omega(\ppi(\pd^s))\one{\A}
    +
    \frac{\partial\Omega(\ppi(\pd^s))}{\partial\ppi(\pd^s)}
    -
    [\ppi(\pd^s)]^T
    \frac{\partial\Omega(\ppi(\pd^s))}{\partial\ppi(\pd^s)}
    \one{\A}
    \nonumber
    \\
    &=
    \frac{\partial\Omega(\ppi(\pd^s))}{\partial\ppi(\pd^s)}
    -
    [\ppi(\pd^s)]^T
    \frac{\partial\Omega(\ppi(\pd^s))}{\partial\ppi(\pd^s)}
    \one{\A}
    +
    \Omega(\ppi(\pd^s))\one{\A}.
    \label{EQIDENTITY3}
\end{align}
If we use the function notation, \eqref{EQIDENTITY3} can be written as
\begin{align*}
    \nabla[\bOmega(d)](s, a)
    &=
    \nabla\Omega(\bpi_d(\cdot|s))(a)
    -
    \E_{a'\sim\bpi_d(\cdot|s)}
    \left[
        \nabla\Omega(\bpi_d(\cdot|s))(a')
    \right]
    +
    \Omega(\bpi_d(\cdot|s))\\
    &=
    t(s, a;\bpi_d)
\end{align*}
for $t$ of \eqref{TARGET_REWARD} in \lemmaref{lemma1}.

\section{Derivation of Bregman-Divergence-Based Measure in Continuous Controls}

In \eqref{EQD}, the Bregman divergence in the control task is defined as
\begin{align}
D_\omega^{\A}(\mathbb{P}_1||\mathbb{P}_2)
:=
\int_\X
\left\{
    \omega(p_1(x))
    -
    \omega(p_2(x))
    -
    \omega'(p_2(x))
    (p_1(x)-p_2(x))
\right\}
dx.
\label{F:EQ1}
\end{align}
Note that we consider
$
    \Omega(p)
    =
    \int_\X\omega(p(x))dx
    =
    \int_\X\left[-f_\phi(p(x))\right]dx
$ for $f_\phi(x)=x\phi(x)$, which makes \eqref{F:EQ1} equal to
\begin{align*}
    &
    \int_\X
    \left\{
        -
        p_1(x)\phi(p_1(x))
        +
        p_2(x)\phi(p_2(x))
        +
        f_\phi'(p_2(x))
        (p_1(x) - p_2(x))
    \right\}
    dx
    \\
    &=
    \int_\X
    p_1(x)
    \left\{
        f_\phi'(p_2(x))
        -
        \phi(p_1(x))
    \right\}
    dx
    -
    \int_\X
    p_2(x)
    \left\{
        f_\phi'(p_2(x))
        -
        \phi(p_2(x))
    \right\}
    dx
    \\
    &=
    \E_{x\sim p_1}
    \left[
        f_\phi'(p_2(x))
        -
        \phi(p_1(x))
    \right]
    -
    \E_{x\sim p_2}
    \left[
        f_\phi'(p_2(x))
        -
        \phi(p_2(x))
    \right].
\end{align*}
Thus, by considering a learning agent's policy $\pi^s=\pi(\cdot|s)$, expert policy $\pie^s=\pie(\cdot|s)$, and the objective in \eqref{BREGMAN_OBJECTIVE} characterized by the Bregman divergence, we can think of the following measure between expert and agent policies:
\begin{align}
    &\E_{s\sim d_{\pi}}
    \left[
        D_\Omega^\A(\pi^s||\pie^s)
    \right]
    \nonumber
    \\
    &=
    \E_{s\sim d_{\pi}}
    \left[
        \E_{a\sim\pi^s}
        \left[
            f_\phi'(\pie^s(a))
            -
            \phi(\pi^s(a))
        \right]
        -
        \E_{a\sim\pie^s}
        \left[
            f_\phi'(\pie^s(a))
            -
            \phi(\pie^s(a))
        \right]
    \right].
    \label{F:EQ2}
\end{align}
\section{Tsallis entropy and associated Bregman divergence among multi-variate Gaussian distributions}
\label{appendix:tsallis}

Based on the derivation in~\citet{nielsen2011renyi}, we derive the Tsallis entropy and associated Bremgan divergence as follows. We first consider the distributions in the exponential family
\begin{align}
    \exp
    \left(
        \inner{\theta}{t(x)}{}
        -  
        F(\theta)
        +
        k(x)
    \right).
\end{align}
Note that for
\begin{align*}
    \theta 
    &=
    \begin{bmatrix}
        \Sigma^{-1}\mu
        \\
        -\frac{1}{2}\Sigma^{-1}
    \end{bmatrix}
    =
    \begin{bmatrix}
        \theta_1
        \\
        \theta_2
    \end{bmatrix},
    \\
    t(x)
    &=
    \begin{bmatrix}
        x
        \\
        xx^T
    \end{bmatrix},
    \\
    F(\theta)
    &=
    -\frac{1}{4}\theta_1^T\theta_2^{-1}\theta_1
    +
    \frac{1}{2}\log|-\pi\theta_2^{-1}|
    =
    \frac{1}{2}\mu^T\Sigma^{-1}\mu
    +
    \frac{1}{2}\log(2\pi)^d|\Sigma|,
    \\
    k(x)
    &=
    0,
\end{align*}
we can recover the multi-variate Gaussian distribution~\citep{nielsen2011renyi}:
\begin{align}
    &\exp(\inner{\theta}{t(x)}{}-F(\theta)+k(x))
    \\
    &=
    \exp
    \left(
        \mu^T\Sigma^{-1}x
        -\frac{1}{2}\mathrm{tr}(\Sigma^{-1}xx^T)
        -\frac{1}{2}\mu^T\Sigma^{-1}\mu
        -\frac{1}{2}\log(2\pi)^d|\Sigma|
    \right)\\
    &=
    \frac{1}{(2\pi)^{d/2}|\Sigma|^{1/2}}
    \exp
    \left(
        \mu^T\Sigma^{-1}x
        -\frac{1}{2}x^T\Sigma^{-1}x
        -\frac{1}{2}\mu^T\Sigma^{-1}\mu
    \right)
    \\
    &=
    \frac{1}{(2\pi)^{d/2}|\Sigma|^{1/2}}
    \exp
    \left(
        \frac{1}{2}
        (x-\mu)^T\Sigma^{-1}(x-\mu)
    \right).
\end{align}
For two distributions with $k(x)=0$, 
\begin{align*}
    \pi(x)
    =
    \exp(\inner{\theta}{t(x)}{}-F(\theta)),
    \hpi(x)
    =
    \exp(\inner{\htheta}{t(x)}{}-F(\htheta))
\end{align*}
that share $t, F$, and $k$, it can be shown that
\begin{align*}
    I(\pi, \hpi;\alpha, \beta)
    &=
    \int
    \pi(x)^\alpha \hpi(x)^\beta
    dx
    \\
    &=
    \exp
    \left(
        F(\alpha\theta+\beta\htheta)
        -
        \alpha
        F(\theta)
        -
        \beta
        F(\htheta)
    \right)
\end{align*}
since
\begin{align*}
    &
    \int
    \pi(x)^\alpha \hpi(x)^\beta
    dx
    \\
    =
    &
    \int
    \exp
    \left(
        \alpha\inner{\theta}{t(x)}{}-\alpha F(\theta)
        +
        \beta\inner{\htheta}{t(x)}{}-\beta F(\htheta)
    \right)
    dx
    \\
    =
    &
    \int
    \exp
    \left(
            \inner{\alpha\theta+\beta\htheta}{t(x)}{}
            -F(\alpha\theta+\beta\htheta)
    \right)
    \exp
    \left(
        F(\alpha\theta+\beta\htheta)
        -
        \alpha F(\theta) 
        -
        \beta F(\htheta)
    \right)
    dx
    \\
    =
    &
    \exp
    \left(
        F(\alpha\theta+\beta\htheta)
        -
        \alpha F(\theta) 
        -
        \beta F(\htheta)
    \right)
    \int
    \exp
    \left(
            \inner{\alpha\theta+\beta\htheta}{t(x)}{}
            -F(\alpha\theta+\beta\htheta)
    \right)
    dx
    \\
    =
    &
    \exp
    \left(
        F(\alpha\theta+\beta\htheta)
        -
        \alpha F(\theta) 
        -
        \beta F(\htheta)
    \right).
\end{align*}

\subsection{Tsallis Entropy}
For $\phi(x)=\frac{k}{q-1}(1-x^{q-1})$ and $k=1$, the Tsallis entropy of $\pi$ can be written as
\begin{align*}
    \mathcal{T}_q(\pi)
    :=
    \E_{x\sim\pi}\phi(x)
    &=
    \int\pi(x)\frac{1-\pi(x)^{q-1}}{q-1}dx
    \\
    &=
    \frac{1-\int \pi(x)^q dx}{q-1}
    \\
    &=
    \frac{1}{q-1}
    \left(
        1 - I(\pi, \pi; q, 0)
    \right)
    =
    \frac{
    1
    -
    \exp
    \left(
        F(q\theta)
        -
        q F(\theta)
    \right)
    }{
    q-1
    }.
\end{align*}
If $\pi$ is a multivariate Gaussian distribution, we have
\begin{align*}
    F(q \theta)
    &
    =
    \frac{q}{2}\mu^T\Sigma^{-1}\mu
    +
    \frac{1}{2}\log(2\pi)^d|\Sigma|
    -
    \frac{1}{2}\log q^d,\\
    q F(\theta)
    &
    =
    \frac{q}{2}\mu^T\Sigma^{-1}\mu
    +
    \frac{q}{2}\log(2\pi)^d|\Sigma|,\\
    F(q \theta)-q F(\theta)
    &
    =
    \frac{1-q}{2}\log(2\pi)^d|\Sigma|
    -
    \frac{1}{2}\log q^d
    \\
    &
    =
    (1-q)
    \left\{
        \frac{d}{2}\log2\pi
        +
        \frac{1}{2}\log|\Sigma|
        -
        \frac{d\log q}{2(1-q)}
    \right\}.
\end{align*}
For $\Sigma=\mathrm{diag}\{\sigma_1^2, ..., \sigma_d^2\}$, we have
\begin{align*}
    F(q \theta)-q F(\theta)
    &
    =
    (1-q)
    \left\{
        \frac{d}{2}\log2\pi
        +
        \frac{1}{2}\log|\Sigma|
        -
        \frac{d\log q}{2(1-q)}
    \right\}
    \\
    &
    =
    (1-q)
    \left\{
        \frac{d}{2}\log2\pi
        +
        \frac{1}{2}\log\prod_{i=1}^d\sigma_i^2
        -
        \frac{d\log q}{2(1-q)}
    \right\}
    \\
    &
    =
    (1-q)
    \sum_{i=1}^d
    \left\{
        \frac{\log2\pi}{2}
        +
        \log\sigma_i
        -
        \frac{\log q}{2(1-q)}
    \right\}.
\end{align*}

\subsection{\Blue{Tractable} form of baseline}

For $\phi(x)=\frac{k}{q-1}(1-x^{q-1})$, we have
\begin{align*}
    f_\phi'(x)
    &=
    \frac{k}{q-1}(1-qx^{q-1})\\
    &=
    \frac{k}{q-1}(q-qx^{q-1}-(q-1))\\
    &=
    \frac{qk}{q-1}(1-x^{q-1})-k\\
    &=
    q\phi(x)-k.
\end{align*}
Therefore, the baseline can be rewritten as
\begin{align*}
    \E_{x\sim\pi}[-f_\phi'(x)+\phi(x)]
    =
    \E_{x\sim\pi}[k - q\phi(x) + \phi(x)]
    =
    (1-q)\mathcal{T}_q(\pi)+k.
\end{align*}
For a multivariate Gaussian distribution $\pi$, the \Blue{tractable} form of $\E_{x\sim\pi}[-f_\phi'(x)+\phi(x)]$ can be derived by using that of Tsallis entropy $\mathcal{T}_q(\pi)$ of $\pi$.

\subsection{Bregman divergence with Tsallis entropy regularization}
\label{bregman_tsallis}
In \eqref{F:EQ2}, we consider the following form of the Bregman divergence:
\begin{align*}
    &\int
    \pi(x)
    \{f_\phi'(\hpi(x))-\phi(\pi(x))\}
    dx
    -
    \int
    \hpi(x)
    \{f_\phi'(\hpi(x))-\phi(\hpi(x))\}
    dx.
\end{align*}
For 
$\phi(x)=\frac{k}{q-1}(1-x^{q-1})$, 
$f_\phi'(x)=\frac{k}{q-1}(1-qx^{q-1})=q\phi(x)-k$, and $k=1$, 
the above form is equal to
\begin{align*}
    &
    \int
    \pi(x)
    \left[
        \frac{1-q\hpi(x)^{q-1}}{q-1}
    \right]
    dx
    -
    \mathcal{T}_q(\pi)
    -
    (q-1)\mathcal{T}_q(\hpi)+1
    \\
    &
    =
    \frac{1}{q-1}
    -
    \frac{q}{q-1}\int\pi(x)\hpi(x)^{q-1}dx
    -
    \mathcal{T}_q(\pi)
    -
    (q-1)\mathcal{T}_q(\hpi)+1
    \\
    &
    =
    \frac{q}{q-1}
    -
    \frac{q}{q-1}\int\pi(x)\hpi(x)^{q-1}dx
    -
    \mathcal{T}_q(\pi)
    -
    (q-1)\mathcal{T}_q(\hpi).
\end{align*}
For multivariate Gaussians
\begin{align*}
    \pi(x)
    &=
    \mathcal{N}
    (x;\mu, \Sigma), 
    \mu=[\nu_1, ..., \nu_d]^T,
    \Sigma=\mathrm{diag}(\sigma_1^2, ..., \sigma_d^2),\\
    \hpi(x)
    &=
    \mathcal{N}
    (x;\hmu, \hSigma), 
    \hmu=[\hnu_1, ..., \hnu_d]^T,
    \hSigma=\mathrm{diag}(\hsigma_1^2, ..., \hsigma_d^2),
\end{align*}
we have
\begin{align*}
    \int\pi(x)\hpi(x)^{q-1}dx
    &
    =
    I(\pi,\hpi;1, q-1)
    =
    \exp
    \left(
        F(\theta')
        -
        F(\theta)
        -
        (q-1)F(\htheta)
    \right),
\end{align*}
where
\begin{align*}
    \theta 
    &=
    \begin{bmatrix}
        \Sigma^{-1}\mu
        \\
        -\frac{1}{2}\Sigma^{-1}
    \end{bmatrix},
    \\
    \htheta 
    &=
    \begin{bmatrix}
        \hSigma^{-1}\hmu
        \\
        -\frac{1}{2}\hSigma^{-1}
    \end{bmatrix}
    ,
    \\
    \theta'
    =\theta+(q-1)\htheta 
    &=
    \begin{bmatrix}
        \Sigma^{-1}\mu
        +
        (q-1)
        \hSigma^{-1}\hmu
        \\
        -\frac{1}{2}
        (
            \Sigma^{-1}
            +
            (q-1)
            \hSigma^{-1}
        )
    \end{bmatrix}
    =
    \begin{bmatrix}
        \theta_1'
        \\
        \theta_2'
    \end{bmatrix}
    ,\\
    \theta_1'
    &=
    \left[
        \frac{\nu_1}{\sigma_1^2}+(q-1)\frac{\hnu_1}{\hsigma_1^2},
        ...,
        \frac{\nu_d}{\sigma_d^2}+(q-1)\frac{\hnu_d}{\hsigma_d^2}
    \right]^T,\\
    \theta_2'
    &=
    -\frac{1}{2}
    \mathrm{diag}
    \left\{
        \frac{1}{\sigma_1^2}+(q-1)\frac{1}{\hsigma_1^2},
        ...,
        \frac{1}{\sigma_d^2}+(q-1)\frac{1}{\hsigma_d^2}
    \right\},\\
\end{align*}
and
\begin{align*}
    F(\theta)
    &
    =
    \frac{1}{2}\mu^T\Sigma^{-1}\mu
    +
    \frac{1}{2}\log(2\pi)^d|\Sigma|
    =
    \sum_{i=1}^d
    \left\{
        \frac{\nu_i^2}{2\sigma_i^2}
        +
        \frac{\log2\pi}{2}
        +
        \log\sigma_i
    \right\}
    ,
    \\
    F(\htheta)
    &
    =
    \frac{1}{2}\hmu^T\hSigma^{-1}\hmu
    +
    \frac{1}{2}\log(2\pi)^d|\hSigma|
    =
    \sum_{i=1}^d
    \left\{
        \frac{\hnu_i^2}{2\hsigma_i^2}
        +
        \frac{\log2\pi}{2}
        +
        \log\hsigma_i
    \right\},
    \\
    F(\theta + (q-1) \htheta)
    &=
    -\frac{1}{4}
    (\theta_1')^T(\theta_2')^{-1}\theta_1'
    +
    \frac{1}{2}\log|-\pi(\theta_2')^{-1}|
    \\
    &=
    \sum_{i=1}^d
    \left\{
        \frac{1}{2}
        \frac{
            \left(
                \frac{\nu_i}{\sigma_i^2}+(q-1)\frac{\hnu_i}{\hsigma_i^2}
            \right)^2
        }{
            \frac{1}{\sigma_i^2}+(q-1)\frac{1}{\hsigma_i^2}
        }
        +
        \frac{\log2\pi}{2}
        +
        \log
        \frac{
            1
        }{
            \frac{1}{\sigma_i^2}+(q-1)\frac{1}{\hsigma_i^2}
        }
    \right\}. 
\end{align*}

\newpage

\section{Experiment Setting}
\label{experiment_setting}

\subsection{Policy Regularizers in Experiments}
\begin{center}
\begin{table}[h]
\small
\caption{
Policy regularizers $\phi$ and their corresponding $f_\phi$~\citep{yang2019regularized}.
}
\centering
\scriptsize
\begin{tabular}{cccc}
\toprule
  reg. type.
& condition
& $\phi(x)$ 
& $f_\phi'(x)$\\
\midrule\midrule\xrowht[()]{10pt}
  $\mathrm{Shannon}$ 
& -
& $-\log x$
& $-\log x-1$
\\\midrule\xrowht[()]{10pt}
  $\mathrm{Tsallis}$ 
& $k > 0, q > 1$
& $\frac{k}{q-1}(1-x^{q-1})$
& $\frac{k}{q-1}(1-qx^{q-1})$
\\\hline\xrowht[()]{10pt}
  $\mathrm{Exp}$
& $k\ge0, q\ge1$
& $q - x^kq^x$
& $q - x^kq^x(k + 1 + x\log q)$
\\\hline\xrowht[()]{10pt}
  $\mathrm{Cos}$
& $0<\theta\le\pi/2$
& $\cos(\theta x) - \cos(\theta)$
& $-\cos(\theta)+\cos(\theta x)-\theta x \sin(\theta x)$
\\\hline\xrowht[()]{10pt}
  $\mathrm{Sin}$
& $0<\theta\le\pi/2$
& $\sin(\theta) - \sin(\theta x)$ 
& $\sin(\theta)-\sin(\theta x) - \theta x \cos(\theta x)$
\\\bottomrule
\end{tabular}
\label{table:reg}
\end{table}
\end{center}

\Blue{
\subsection{Density-based model}
\label{sec:dbm}

By exploiting the knowledge on the reward in \corollaryref{corollary1}
\begin{align*}
    -f_\phi'(\pi(a|s))
    -
    \E_{a'\sim\pi(\cdot|s)}
    [f_\phi'(\pi(a'|s))-\phi(\pi(a'|s))],
\end{align*}
we consider the density-based model (DBM) which is defined by
\begin{align*}
    r_\theta(s, a)\approx-f_\phi'(\pi_{\theta_1}(a|s))+B_{\theta_2}(s)
\end{align*}
for $\theta=(\theta_1, \theta_2)$. Here, $f_\phi'$ is a function that can be known priorly, and $\pi_{\theta_1}$ is a neural network which is defined separately from the policy neural network $\pi_{\psi}$. The DBM for discrete control problems are depicted in Figure~\ref{modelfigure} (\emph{Left}). The model outputs rewards over all actions in parallel, where softmax is used for $\pi_{\theta_1}(\cdot|s)$ and $-f_\phi'$ is elementwisely applied to those softmax outputs followed by elementwisely adding $B_{\theta_2}(s)$.
For continuous control (Figure~\ref{modelfigure}, \emph{Right}), we use the network architecture similar to that in discrete control, where multivariate Gaussian distribution is used instead of softmax layer.

\begin{figure}[h]
\centering
\includegraphics[width=0.3\textwidth]{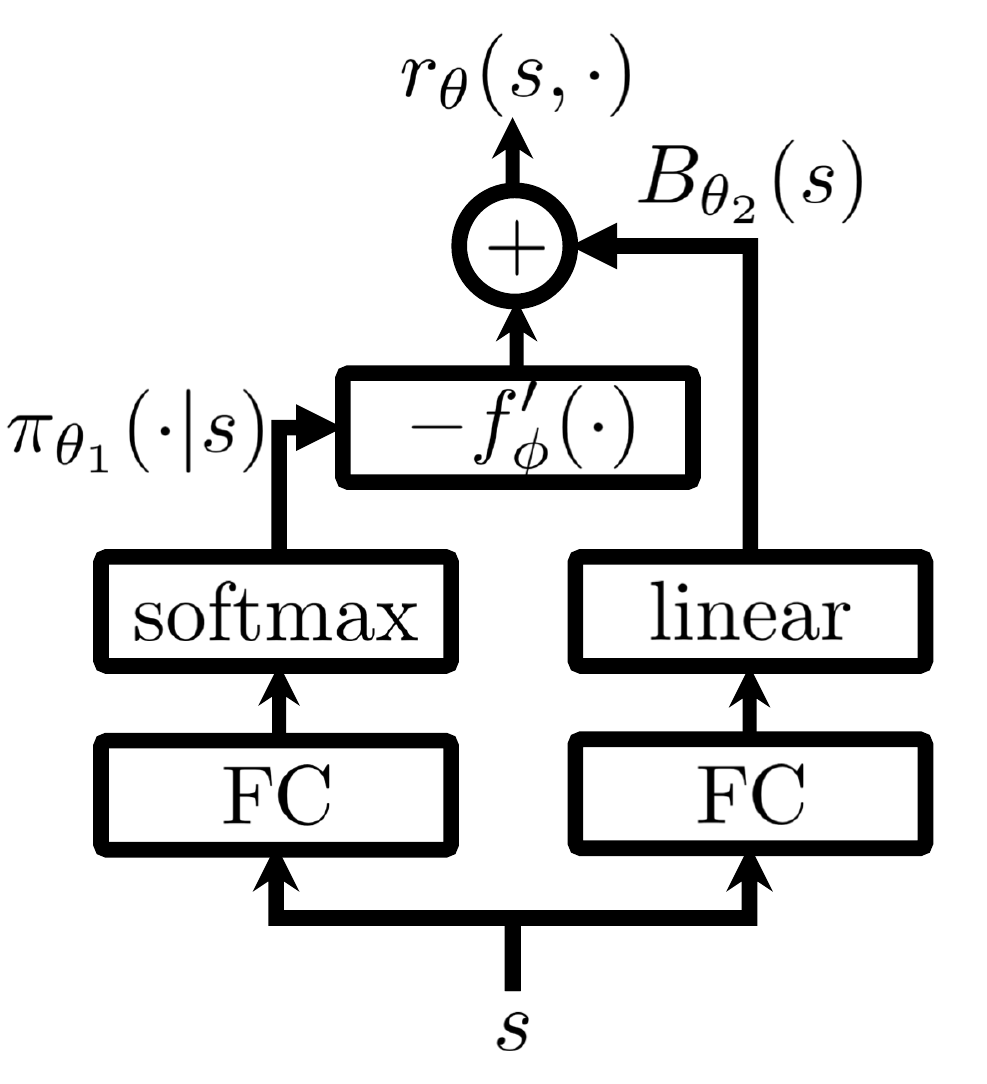}
\hspace{0.2in}
\includegraphics[width=0.3\textwidth]{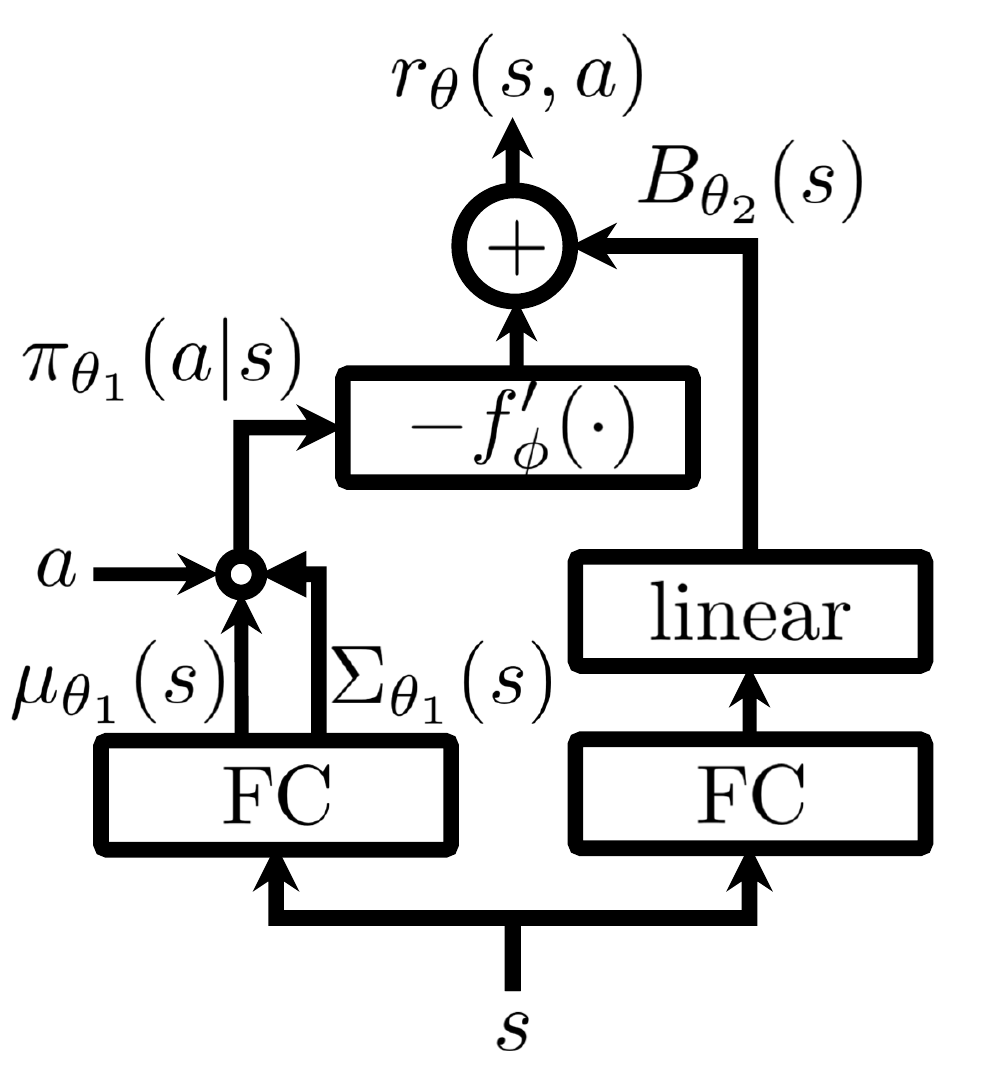}
\caption{Density-based model for discrete (\emph{Left}) and continuous control (\emph{Right})}
\label{modelfigure}
\end{figure}
}

\subsection{Expert in Bermuda World environment}
\label{expert_in_bermuda}

We assume a stochastic expert defined by 
\begin{gather*}
    \pie(a|s)
    =
    \frac{
    \sum_{t=1}^3
    (d^{(t)})^{-1}
    \I\{
    a=\mathrm{Proj}(\theta^{(t)})
    \}
    }{
    \sum_{t=1}^3
    (d^{(t)})^{-1}
    },\\
    \theta^{(t)}=\arctantwo(
        \bar{y}^{(t)}-y,
        \bar{x}^{(t)}-x
    ),
    d^{(t)}=\|\bar{s}^{(t)}-s\|_2^4 + \epsilon, t=1, 2, 3,
\end{gather*}
for 
$
    s=(x, y), 
    \bar{s}^{(1)}
    =
    (
        \bar{x}^{(1)}, 
        \bar{y}^{(1)}
    )=(-5, 10), 
    \bar{s}^{(2)}
    =
    (
        \bar{x}^{(2)}, 
        \bar{y}^{(2)}
    )=(0, 10), 
    \bar{s}^{(3)}
    =
    (
        \bar{x}^{(3)}, 
        \bar{y}^{(3)}
    )=(5, 10)
$, 
$\epsilon=10^{-4}$ and 
an operator $\mathrm{Proj}(\theta):\R\rightarrow\A$ that maps $\theta$ to the closest angle in $\A$.
In Figure~\ref{expert policy}, we depicted the expert policy.
\begin{center}
    \begin{figure}[h]
        \centering
        \includegraphics[height=0.23\textwidth]{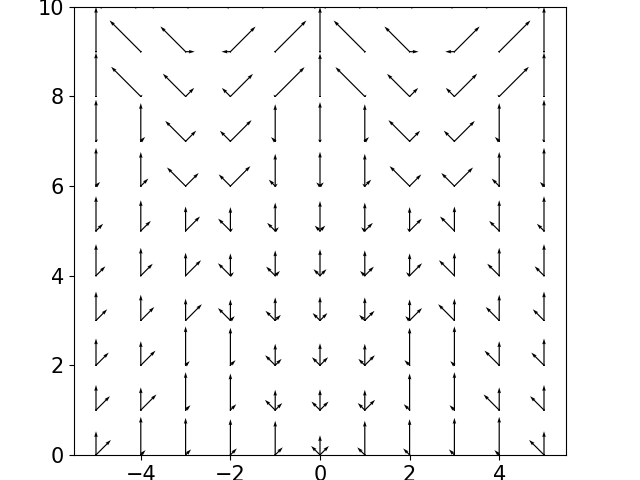}
        \caption{
        Visualization of the expert policy
        }
        \label{expert policy}
    \end{figure}
    \vspace{-20pt}
\end{center}

\subsection{MuJoCo experiment setting}\label{appendix_mujoco}

Instead of directly using MuJoCo environments with $\mathrm{tanh}$-squashed policies proposed in \Blue{Soft Actor-Critic (SAC)~\citep{haarnoja2018soft}}, we move $\tanh$ to a part of environment---named \emph{hyperbolized environments} in short---and assume Gaussian policies. Specifically, after an action $a$ is sampled from the policies, we pass $\mathrm{tanh}(a)$ to the environment.
We then consider multi-variate Gaussian policy
\begin{align*} 
\pi(\cdot|s)=\mathcal{N}\left(\pmb{\mu}(s), \pmb{\Sigma}(s)\right)
\end{align*}
with $\pmb{\mu}(s)=[\mu_1(s), ..., \mu_d(s)]^T$, $\pmb{\Sigma}(s)=\mathrm{diag}\{(\sigma_1(s))^2, ..., (\sigma_d(s))^2\}$, where
\begin{align*}
    -\mathrm{arctanh}(0.99)\le\mu_i(s)\le\mathrm{arctanh}(0.99),
    \log(0.01)\le\log\sigma_i(s)\le\log(2)
\end{align*}
for all $i=1, ..., d$.
Instead of using clipping, we use $\mathrm{tanh}$-activated outputs and scale them to be fit in the above ranges, which empirically improves the performance. 
Also, instead of using potential-based reward shaping used in AIRL~\citep{fu2018learning}, we update moving mean of intermediate reward values and update \Blue{the} value \Blue{network} with mean-subtracted rewards\Blue{---so that the value network gets approximately mean-zero reward---}to stabilize RL \Blue{part of RAIRL.} Note that this is motivated by \lemmaref{lemma2} from which we can guarantee that any constant shift of reward functions does not change optimality.

\newpage
\subsection{Hyperparameters}
Table \ref{table:fixed_hyperparams_Bandit}, Table \ref{table:fixed_hyperparams_BermudaWorld} and Table \ref{table:fixed_hyperparams_MuJoCo} list the parameters used in our Bandit, Bermuda World, and MuJoCo experiments, respectively.

\begin{table}[h]
\small
\centering
\caption{Hyperparameters for Bandit environments.}
\begin{tabular}{lc}
\label{table:fixed_hyperparams_Bandit}
\\ \toprule
Hyper-parameter                             &Bandit 
\\ \hline\hline
Batch size                     &{ 500 } \\
Initial exploration steps
                                            &{ 10,000 } \\
Replay size                    &{ 500,000 } \\
Target update rate ($\tau$)    &{ 0.0005 } \\
Learning rate                  &{ 0.0005 } \\
$\lambda$                      &{ 5 } \\
$q$ (Tsallis entropy $\mathcal{T}_q^k$)
                                            &{ 2.0 } \\
$k$ (Tsallis entropy $\mathcal{T}_q^k$)
                                            &{ 1.0 } \\
 Number of trajectories         &{ 1,000 } \\
Reward learning rate           &{ 0.0005 } \\
Steps per update       
                                            &{ 50 } \\
Total environment steps        &{ 500,000 } \\
\bottomrule
\end{tabular}
\end{table}

\begin{table}[h]
\small
\centering
\caption{Hyperparameters for Bermuda World environment.}
\begin{tabular}{lc}
\label{table:fixed_hyperparams_BermudaWorld}
\\ \toprule
Hyper-parameter                             &Bermuda World
\\ \hline\hline
Batch size                     &{ 500 } \\
Initial exploration steps         
                                            &{ 10,000 } \\
Replay size                    &{ 500,000 } \\
Target update rate ($\tau$)     
                                            &{ 0.0005 } \\
Learning rate                  &{ 0.0005 } \\
$q$ (Tsallis entropy $\mathcal{T}_q^k$)
                                            &{ 2.0 } \\
$k$ (Tsallis entropy $\mathcal{T}_q^k$)
                                            &{ 1.0 } \\
 Number of trajectories         &{ 1,000 } \\
Reward learning rate           &{ 0.0005 } \\
(For evaluation) $\lambda$
                                            &{ 1 } \\
(For evaluation) Learning rate             
                                            &{ 0.001 } \\
(For evaluation) Target update rate ($\tau$)
                                            &{ 0.0005 } \\
Steps per update   
                                            &{ 50 } \\
Number of steps                &{ 500,000 } \\
\bottomrule
\end{tabular}
\end{table}

\begin{table}[h]
\small
\centering
\caption{Hyperparameters for MuJoCo environments.}
\begin{tabular}{lcccccc}
\label{table:fixed_hyperparams_MuJoCo}
\\ \toprule
Hyper-parameter                             &Hopper             &Walker2d           &HalfCheetah            &Ant
\\ \hline
\\
Batch size                     &{ 256 }            &{ 256 }            &{ 256 }                &{ 256 } \\
Initial exploration steps
                                            &{ 10,000 }         &{ 10,000 }         &{ 10,000 }             &{ 10,000 } \\
Replay size                    &{ 1,000,000 }      &{ 1,000,000 }      &{ 1,000,000 }          &{ 1,000,000 } \\
Target update rate ($\tau$)    &{ 0.005 }          &{ 0.005 }          &{ 0.005 }              &{ 0.005 } \\
Learning rate                  &{ 0.001 }          &{ 0.001 }          &{ 0.001 }              &{ 0.001 } \\
$\lambda$                         &{0.0001}           &{0.000001}         &{ 0.0001 }             &{ 0.000001 } \\
$k$ (Tsallis entropy $\mathcal{T}_q^k$)
                                            &{ 1.0 }            &{ 1.0 }            &{ 1.0 }                &{ 1.0 } \\
 Number of trajectories         &{ 100 }            &{ 100 }              &{ 100 }              &{ 100 } \\
Reward learning rate           &{ 0.001 }          &{ 0.001 }            &{ 0.001 }            &{ 0.001 } \\
Steps per update   
                                            &{ 1 }              &{ 1 }              &{ 1 }                  &{ 1 } \\
Number of steps                &{ 1,000,000 }      &{ 1,000,000 }      &{ 1,000,000 }          &{2,000,000} \\       
\bottomrule
\end{tabular}
\end{table}

\end{document}